\documentclass{article} 
\usepackage{iclr2026_conference,times}

\usepackage{amsmath,amsfonts,bm}

\def\eqref#1{equation~\ref{#1}}

\def\1{\bm{1}}

\DeclareMathAlphabet{\mathsfit}{\encodingdefault}{\sfdefault}{m}{sl}
\SetMathAlphabet{\mathsfit}{bold}{\encodingdefault}{\sfdefault}{bx}{n}

\usepackage{hyperref}
\usepackage{url}

\usepackage{graphicx}
\usepackage{amsthm}
\newtheorem{definition}{Definition}[section]
\newtheorem{proposition}{Proposition}

\newtheorem{remark}{Remark}
\usepackage{algorithm}
\usepackage{algorithmic}
\usepackage{amsmath}
\usepackage{amssymb}
\usepackage{amsfonts}
\usepackage{booktabs}   
\usepackage{multirow}   
\usepackage{threeparttable} 
\usepackage{rotating}
\usepackage{subcaption}
\usepackage{hyperref}
\usepackage{xcolor,colortbl} 
\hypersetup{
    colorlinks=true,  
    citecolor=blue,   
    linkcolor=blue,    
    urlcolor=cyan     
}

\title{\textsc{ReSched}: Rethinking Flexible Job Shop Scheduling from a Transformer-based \\ Architecture with Simplified States}

\author{Xiangjie Xiao$^{1}$, Cong Zhang$^{2}$, Wen Song$^3$, Zhiguang Cao$^{1}$\thanks{Corresponding author.} \, \\ 
$^1$School of Computing and Information Systems, Singapore Management University, Singapore\\
$^2$College of Computing and Data Science, Nanyang Technological University, Singapore\\
$^3$Institute of Marine Science and Technology, Shandong University, China\\
\texttt{xj.xiao.2024@phdcs.smu.edu.sg, cong.zhang92@gmail.com,} \\
\texttt{wensong@email.sdu.edu.cn, zgcao@smu.edu.sg} \\
}

\iclrfinalcopy 
\begin{document}

\maketitle

\begin{abstract}

Neural approaches to the Flexible Job Shop Scheduling Problem (FJSP), particularly those based on deep reinforcement learning (DRL), have gained growing attention in recent years. However, existing methods rely on complex feature-engineered state representations (i.e., often requiring more than 20 handcrafted features) and graph-biased neural architectures. To reduce modeling complexity and advance a more generalizable framework for FJSP, we introduce \textsc{ReSched}, a minimalist DRL framework that rethinks both the scheduling formulation and model design. First, by revisiting the Markov Decision Process (MDP) formulation of FJSP, we condense the state space to just four essential features, eliminating historical dependencies through a subproblem-based perspective. Second, we employ Transformer blocks with dot-product attention, augmented by three lightweight but effective architectural modifications tailored to scheduling tasks. Extensive experiments show that \textsc{ReSched} outperforms classical dispatching rules and state-of-the-art DRL methods on FJSP. Moreover, \textsc{ReSched} also generalizes well to the Job Shop Scheduling Problem (JSSP) and the Flexible Flow Shop Scheduling Problem (FFSP), achieving competitive performance against neural baselines specifically designed for these variants. Our code is available at \url{https://github.com/XiangjieXiao/ReSched}.

\end{abstract}

\section{Introduction}

The Flexible Job Shop Scheduling Problem (FJSP) is a fundamental combinatorial optimization problem (COP) with widespread applications in manufacturing~\citep{ding2019defining, wang2024end}, edge computing~\citep{luo2021resource, yang2025graph}, and logistics~\citep{SATUNIN20146622, arunarani2019task}. In FJSP, jobs are decomposed into sequences of operations, each processable by a set of compatible machines. Solving FJSP entails two coupled decisions: assigning operations to machines and sequencing them on each machine. As a generic model, FJSP unifies multiple scheduling scenarios: it reduces to the Job Shop Scheduling Problem (JSSP) when machine assignments are fixed, and to the Flexible Flow Shop Scheduling Problem (FFSP) when jobs follow a shared processing stage but retain machine flexibility. This inherent generality makes FJSP a critical framework for diverse real-world systems.

Recent research has leveraged Deep Reinforcement Learning (DRL) to construct scheduling heuristics~\citep{feng2021flexible, lei2022multi}, typically representing partial solutions (states) via disjunctive graphs~\citep{blazewicz2000disjunctive} enriched with complex node features. However, many of these features are redundant\footnote{For instance, we demonstrate in Appendix~\ref{app:exp} using DANIEL~\citep{DANIEL} that removing half of the input features does not compromise performance.}, and incorporating historical construction data into the current state can paradoxically degrade learning (see Section~\ref{sec:MDP}). Furthermore, pruning unpromising actions based on human heuristics~\citep{Song,DANIEL,WWW}, while intended to boost efficiency, often hampers policy generalization and yields suboptimal solutions (see Appendix~\ref{app:exp}). These practices necessitate persistent tracking of auxiliary variables at every step, introducing significant computational overhead. Architecturally, most DRL approaches rely on Graph Attention Networks (GATs)~\citep{velickovic2017graph}, which impose rigid inductive biases: capturing long-range dependencies requires deep stacking, while linear attention mechanisms struggle to model complex non-local scheduling interactions. These limitations prompt a central question: \textit{Can we design a general construction policy, derived from a minimal Markov-sufficient state and implemented with a generic yet expressive architecture, that generalizes naturally to FJSP variants?}

We address this by proposing \textsc{ReSched}, a DRL framework that unifies minimalist state design with flexible neural modeling to achieve state-of-the-art (SOTA) performance. We demonstrate that ensuring state sufficiency reduces the need for heavy architectural inductive bias. Our formulation introduces a compact state representation comprising only four core node features and a graph structure that explicitly encodes intra-job dependencies, thereby eliminating the need for historical tracking. To enhance representational power, we replace conventional GNN-based policies~\citep{franco2009graph} with a Transformer backbone featuring two complementary branches: self-attention for operations and cross-attention for machines. Applying Transformers to FJSP presents specific challenges: self-attention must capture process constraints without extra parameters, while cross-attention must handle indirect edge-feature integration and severe operation-machine imbalance (often exceeding 10:1). We overcome these by incorporating Rotary Positional Encoding (RoPE) for intra-job sequencing and determining a novel cross-attention module that directly embeds edge features while using self-connections to mitigate representation dilution. Empirical results show that \textsc{ReSched} establishes a new SOTA on FJSP, outperforming both handcrafted heuristics and leading DRL baselines. Crucially, it generalizes robustly across problem sizes, datasets, and variants (JSSP, FFSP), proving that a minimalist design coupled with expressive modeling enables broad applicability. Our key contributions are:

\begin{itemize}

 \item We revisit the MDP for FJSP to design a compact state representation with only four essential node features. By utilizing a graph structure that explicitly encodes intra-job relationships, we eliminate redundant features, historical dependencies, and auxiliary variables.

 \item We introduce a dual-branch Transformer architecture tailored for scheduling. It features self-attention enhanced by RoPE to model intra-job dependencies, and a novel cross-attention module that directly integrates edge features while mitigating operation–machine imbalance through self-connections.

 \item We demonstrate that \textsc{ReSched} achieves SOTA performance on FJSP benchmarks while exhibiting strong generalization capabilities across varying problem sizes and scheduling variants, including JSSP and FFSP.
\end{itemize}

\section{Related Work}
Priority dispatching rules (PDRs)~\citep{haupt1989survey, sels2012comparison} are widely used in real-world FJSPs for their simplicity, interpretability, and fast decision-making. However, designing effective and generalizable PDRs remains challenging, as they often rely on domain expertise and fail to adapt across diverse problem instances. This limitation has motivated a surge of interest in learning PDRs through deep reinforcement learning (DRL). Most DRL-based methods formulate FJSP as a disjunctive graph, where nodes represent operations and machines, and edges encode precedence and assignment constraints. Graph neural networks (GNNs) are then employed to capture the complex relationships among operations and machines. For example, \citet{Song} proposed HGNN, a heterogeneous GNN tailored for FJSP; \citet{DANIEL} introduced a dual-attention mechanism to jointly model operation and machine features; and \citet{WWW} developed a GNN-based approach augmented with reward shaping to improve training efficiency. While effective, these methods typically depend on heavily engineered state representations, which limit scalability and generalization. Beyond FJSP, DRL has also been applied to other scheduling variants. For instance, \citet{L2D} proposed a GNN-based DRL framework for JSSP that learns operation sequencing policies, and \citet{kwon2021matrix} introduced a mixed-score attention mechanism to model operation–machine interactions in FFSP. These works further highlight the growing role of neural methods in advancing data-driven scheduling.

\section{Preliminary}

\subsection{Flexible Job Shop Scheduling Problem (FJSP)}
Consider a generic scheduling problem with \textit{operations} as fundamental units.
Suppose that there are two sets: a set of operations $\mathcal{O} = \{O_1, O_2, \dots, O_n\}$ and a set of machines $\mathcal{M} = \{M_1, M_2, \dots, M_m\}$. Each operation $O_i$ can be processed by one of the machines in $\mathcal{M}$ with a specific duration (processing time) $D_i^m > 0$. In FJSP, jobs are collections of operations that must be executed sequentially. Each job $J_i$ consists of several operations, so operations can be represented as $O_{ij}$, where $i$ denotes the job index and $j$ represents the position of the operation in job $i$. Each operation $O_{ij}$ can be processed by one or multiple machines, which means "flexible", from a compatible machine set $\mathcal{M}_{ij} \subseteq \mathcal{M}$ with duration $D_{ij}^m > 0$. Meanwhile, the FJSP is subject to several important \emph{constraints}: (1) An operation can only begin after all its preceding operations in the job sequence have been completed.  (2) Each operation must be assigned to a single eligible machine and executed non-preemptively, without interruption once started. (3) Each machine processes only one operation at a time, requiring sequential execution without overlap. The \emph{objective} of FJSP is to find a feasible solution that satisfies all the above constraints while minimizing the overall makespan, which is defined as the maximum finish time among all operations. Moreover, the transformation of FJSP to other variants like JSSP and FFSP are presented in Appendix~\ref{app:other-variant}. 

\subsection{Transformer Block} 
The Transformer block~\citep{vaswani2017attention} is a fundamental component designed to capture dependencies in sequential data. It primarily consists of a multi-head attention (MHA) mechanism and a feed-forward network (FFN), combined with residual connections and layer normalization~\citep{he2016deep, ba2016layer}. In the attention mechanism, each element attends to all others through a weighted combination of their representations. For a given input sequence $h = [h_1, h_2, \dots, h_n]$, the attention weights are computed by measuring the similarity between a query $q_a \in \mathbb{R}^d$ and keys $k_b \in \mathbb{R}^d$, typically obtained via learned linear projections of $h_a$ and $h_b$. The normalized attention weight $\alpha_{a,b}$ from node $a$ to node $b$, and the resulting embedding $h_a'$ for node $a$ are computed as follows (head is omitted for brevity):
\begin{equation}
    \alpha_{a,b} = \text{softmax}_{b} \left( \frac{\langle q_a, k_b \rangle}{\sqrt{d}} \right), \quad h_a' = \sum_{b=1}^{n} \alpha_{a,b} v_b,
\end{equation}
where $\langle \cdot, \cdot \rangle$ denotes the dot product, $\sqrt{d}$ is a scaling factor to stabilize training, and $v_b \in \mathbb{R}^d$ is the value vector corresponding to node $b$.  The outputs from multiple heads are concatenated and passed through a linear transformation to form the final MHA output, which is then processed by a position-wise FFN, with residual connections and layer normalization applied after both the attention and FFN sub-layers. This structure allows the Transformer block to effectively capture global dependencies and complex interactions among elements in the input.

\section{Methodology}
In this section, we introduce \emph{ReSched}, a construction-type neural framework designed for solving scheduling problems. We begin by revisiting the problem formulation, focusing on the Flexible Job Shop Scheduling Problem (FJSP), a generalized model that captures diverse scheduling scenarios. Within this formulation, we cast scheduling as a Markov Decision Process (MDP), where each decision step resolves a subproblem using a compact, task-specific state representation. Building on this MDP, we develop a Transformer-based policy network equipped with structure-aware attention mechanisms and trained via reinforcement learning.

\subsection{State Representation}
In our framework, we aim to minimize the complexity of the state representation while ensuring it remains fully expressive and sufficient to guide optimal scheduling decisions.

\subsubsection{Revisiting the Scheduling Formulation}
We represent an FJSP instance using a heterogeneous disjunctive graph~\citep{Song}, as illustrated in Figure~\ref{fig:formulation}. A solution to FJSP consists of two key components: \emph{assignment} and \emph{order}. To evaluate solution quality, an explicit formulation is required to compute the makespan.

Let $a_{t, ij}^m \in \{0, 1\}$ indicate assignment of $O_{ij}$ to machine $m$ at step $t$. To ensure only one operation-machine pair is scheduled per step, we enforce a constraint:
$ \sum_{m \in \mathcal{M}} \sum_{\left(ij \right) \in \mathcal{O}} a_{t, ij}^m = 1, \forall t.$
When $a_{t, ij}^m = 1$, the finish time $FT_{ij}$ is computed as:
\begin{equation} \label{eq:FT}
    \begin{aligned}
        & FT_{ij} = \max \left( FT_{i(j-1)}, AT^m_t \right) + D_{ij}^m, \quad \text{if } a_{t, ij}^m = 1, \\
        & AT^m_t =
        \begin{cases}
            FT_{i'j'} & \text{if } a_{t-1, i'j'}^m = 1 \\
            AT^m_{t-1} & \text{otherwise}.
        \end{cases}
    \end{aligned}
\end{equation}
Here, $FT_{i(j-1)}$ denotes the finish time of the preceding operation in job $J_i$, $AT^m_t$ is the available time of machine $m$ at step $t$, and $(i'j')$ is the operation assigned to machine $m$ at the previous step.
The goal of FJSP is to minimize the maximum finish time across all operations, which is defined as $FT_{\max} = \max_{(ij) \in \mathcal{O}} FT_{ij}$. 
This unified formulation supports our state representation simplification, and it can also be adapted to JSSP and FFSP by appropriately modifying machine assignment rules. For the sake of space, the detailed mathematical formulation of FJSP and its variants are deferred to Appendix~\ref{app:formulation}.

Looking at Figure~\ref{fig:formulation} again, it illustrates how two different solutions for the same FJSP instance are translated, via the above formulation, into concrete Gantt charts and makespan values. Specifically, the top solution with order $[ O_{11}, O_{12}, O_{21}, O_{22} ]$ and assignment $[ M_2, M_1, M_2, M_1 ]$ results in a makespan of $FT_{\max} = 7$, while the bottom solution with order $[ O_{11}, O_{21}, O_{12}, O_{22} ]$ and assignment $[ M_2, M_1, M_2, M_2 ]$ achieves a makespan of $FT_{\max} = 6$.

\subsubsection{MDP Formulation}
\label{sec:MDP}
As a construction method, the scheduling process can be viewed as a sequential decision-making problem, where the agent iteratively selects an operation-machine pair to assign at each step. 
This leads to a natural formulation of the scheduling problem as an MDP. 

\begin{figure}[t]
\centering
\begin{subfigure}[t]{0.50\linewidth}
    \includegraphics[width=0.99\columnwidth]{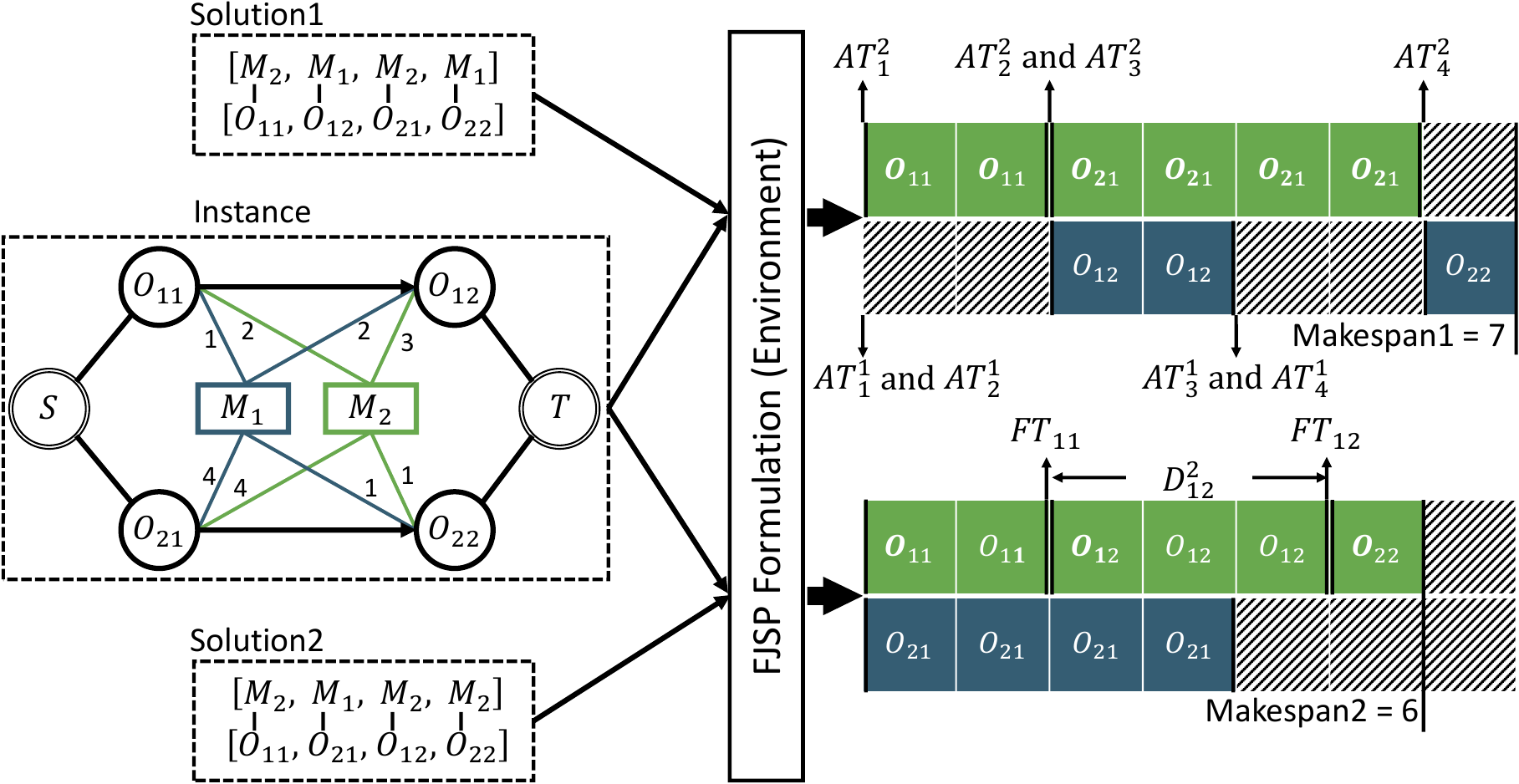}
    \caption{}
    \label{fig:formulation}
\end{subfigure}\hfill
\begin{subfigure}[t]{0.49\linewidth}
    \centering
    \includegraphics[width=0.78\columnwidth]{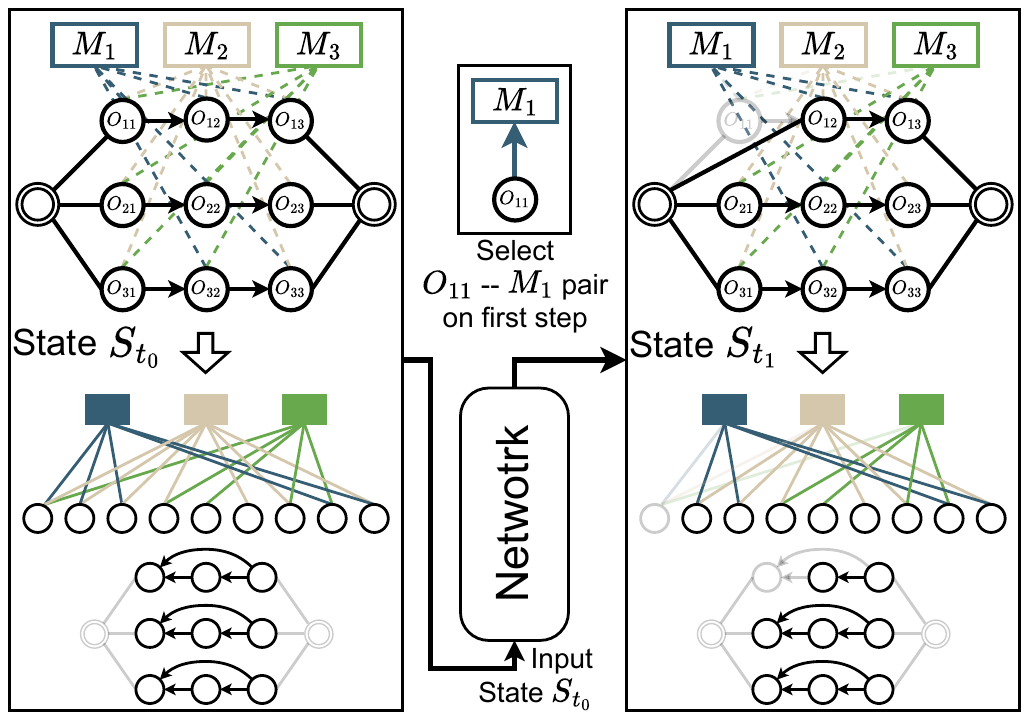}
    \caption{}
    \label{fig:topology}
\end{subfigure}
\caption{ (a) Illustration of the formulation for a 2‑job, 2‑operation, 2‑machine (2-2-2) FJSP instance. (b) Changes in topology and O2M/O2O Connection between two steps for a 3-3-3 instance.}
\label{fig:State}
\end{figure}

\paragraph{State: Minimal Representation}
According to Eq.~(\ref{eq:FT}), computing the finish time for operation $O_{ij}$ at step $t$ requires three pieces of information: (1) the finish time of its predecessor $O_{i(j-1)}$, (2) the \emph{Duration} $D_{ij}^m$ of $O_{ij}$ on machine $M_m$, and (3) the available time $AT^m_t$ of machine $M_m$. Particularly, “predecessor” denotes the precedence constraint between operations: an operation may start only after all of its predecessors have finished; we refer to this as the \emph{operation-to-operation (O2O) dependency}. Meanwhile, the \emph{Machine Available Time} $AT^m_t$ is determined by the finish time of the operation assigned to machine $M_m$ at the previous steps. Therefore, the $AT^m_t$ is influenced by the \emph{operation-to-machine (O2M) connection}. From a consistency perspective, the finish time of $O_{i(j-1)}$ can also be interpreted as the \emph{Operation Available Time} for its successor $O_{ij}$.

Thus, the transition from step $t-1$ to $t$ requires only the following information: 1) Operation and Machine Available Time (Node feature); 2) Duration (Edge feature); 3) O2O Dependency (Graph structure); 4) O2M Connection (Graph structure).

\begin{definition}
\label{def:state}
Let $\mathcal{S}_t \in \mathbb{S}$ be the state representation at decision step $t$, which uniquely determines the following:
1) the available times of all machines, 
2) the completion status of all jobs (i.e., the finish times of their most recently scheduled operations), 
3) the O2O precedence between operations, and 
4) the O2M connections (including the corresponding durations).
\end{definition}

\begin{proposition} [State-dependent Optimality in Scheduling]
\label{prop:state_optimality}
For any two scheduling trajectories $\tau_1$ and $\tau_2$ that reach the same state $\mathcal{S}_t$, the corresponding remaining subproblems share an identical feasible solution set.
\end{proposition}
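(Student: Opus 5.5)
We will show that the remaining subproblem at step $t$ is a function of $\mathcal{S}_t$ alone. Then two trajectories $\tau_1,\tau_2$ reaching the same $\mathcal{S}_t$ induce literally the same subproblem, and in particular the same feasible solution set. Concretely, we define the \emph{remaining subproblem} at step $t$ as follows. Its data are the set of unscheduled operations, their O2O precedence among themselves, their compatible machines with durations, the release time of each job (the finish time of its most recently scheduled operation), and the release time of each machine ($AT^m_t$). A feasible solution of this subproblem is any sequence of operation–machine pairs covering the unscheduled operations. Such a sequence must respect precedence and eligibility, and its finish times must be generated by the recursion in Eq.~(\ref{eq:FT}), initialized with these release times.

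The first step is to check that every ingredient of this subproblem is determined by $\mathcal{S}_t$ via Definition~\ref{def:state}.
\begin{itemize}
\item Machine release times are item 1).
\item Job release times are item 2).
\item The O2M connections with durations, item 4), give the eligibility sets and processing times of the unscheduled operations.
\item The O2O precedence, item 3), together with the job completion status, identifies which operations are still unscheduled and in what order they must occur within each job.
\end{itemize}
Here I will use the graph-structure convention of Figure~\ref{fig:topology}: scheduled operations and their edges are removed or marked, so the unscheduled set is read off $\mathcal{S}_t$. If the definition is read more loosely, I will argue that the unscheduled set equals the set of operations whose job-position exceeds the last scheduled index. That index is encoded by item 2) together with item 3).

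The second step, and the main obstacle, is to show that the past affects the future only through these quantities. This is a Markov-sufficiency argument by induction on future steps $t' \ge t$. By Eq.~(\ref{eq:FT}), $FT_{ij}$ for a newly scheduled operation depends only on two things: $FT_{i(j-1)}$ and $AT^m_{t'}$. The first is either the current job release time or a finish time produced after step $t$. The second is either $AT^m_t$ or a finish time produced after step $t$. Hence, for any fixed continuation sequence of actions, both the set of feasible actions and all resulting finish times are identical under $\tau_1$ and $\tau_2$. The subtle point is feasibility. I must check that the admissibility of an action (precedence satisfied, machine eligible, operation not yet scheduled) never refers to scheduled operations other than through the release times. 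Precedence of $O_{ij}$ refers only to $O_{i(j-1)}$, which is either unscheduled (and so lies in the subproblem) or is the last scheduled operation of job $i$, whose finish time is stored in the state.

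Finally, I conclude that the map from continuation sequences to feasibility and finish times is identical under both trajectories, so the feasible solution sets coincide. As a remark, since the completed part's makespan is at most the maximum of the release times, the objective $FT_{\max}$ also agrees on every completion. This gives the optimality reading suggested by the proposition's title.
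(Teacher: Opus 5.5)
Your proposal is correct and takes essentially the same route as the paper's proof. Both arguments show that every ingredient of the remaining subproblem (machine available times, job release times, and the remaining operations with their restricted O2O and O2M relations) is determined by $\mathcal{S}_t$, so the feasible completion set depends on the state alone; your induction over the recursion in Eq.~(\ref{eq:FT}) just makes explicit what the paper asserts ``by construction.''

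One caveat: drop your fallback ``loose reading.'' A job's last finish time together with the static O2O graph does not in general identify which operations are completed, since two trajectories can reach identical machine available times and job finish times with different completed sets. Like the paper, you must rely on the state itself identifying the completed operations.
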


As a direct consequence, the optimal decision at step $t$ depends only on $\mathcal{S}_t$, rather than on the full trajectory history. With this state definition, the scheduling problem can be viewed as a finite-state MDP that satisfies the Markov property. A formal proof of Proposition~\ref{prop:state_optimality} is provided in Appendix~\ref{app:proof_state_optimality}.

\paragraph{State: Subproblem}
In our framework, each scheduling step is modeled as an individual subproblem. To better support this formulation, we refine the state representation along two dimensions: available time from \emph{node features} and O2O dependencies from \emph{graph structure}, thereby excluding historical information and focusing solely on the current subproblem.
1) Relative Available Time. We normalize all operation and machine available times by subtracting the global minimum available time at each step. This prevents the unbounded growth of absolute time values and mitigates potential generalization issues. A conceptually related idea of removing historical dependencies and using relative-time features for JSSP was explored by \cite{lee2024graph}; Unlike this JSSP-specific design, we remove all historical information under a subproblem-based formulation and extend the relative-time principle to a unified minimal state for FJSP and its variants. 2) O2O Connection. Instead of the bidirectional operation-to-operation (O2O) edges commonly used in prior work~\citep{Song,DANIEL,WWW}, we adopt backward-looking edges. Under the subproblem formulation, each operation only requires information from its successors, making redundant historical tracking unnecessary. To further improve efficiency, we introduce hop connections from each operation to all its successors, granting direct access to job-level future constraints without relying on multi-layer message passing. Figure~\ref{fig:topology} illustrates how graph topology and O2O/O2M connections evolve: once an operation is scheduled, it and its associated O2O/O2M connections are removed, yielding a new subproblem.

Regarding the node feature mentioned in \emph{Minimal Representation}, we also incorporate the Minimum Duration $\min_{m \in \mathcal{M}_{ij}} D_{ij}^m$ across candidate machines as a compact yet informative proxy. This value provides a lower bound on processing time, enabling the network to distinguish operations of varying difficulty. Although it depends on the set of candidate machines and may be sensitive to instance variations, prior work~\citep{Song,lei2022multi,DANIEL,WWW} shows that it is a simple and effective approximation in practice. In our framework, it significantly improves learning efficiency without requiring explicit machine assignments.

\paragraph{State: Features}
Our state representation consists of four key features: 1) Operation Available Time; 2) Machine Available Time; 3) Duration; and 4) Minimum Duration. Note that dependency and machine eligibility are not features but graph structure, represented by O2O/O2M connections.

\paragraph{Action}
In our MDP formulation, the action at step $t$ corresponds to selecting an operation–machine pair $(ij, m)$, meaning that operation $O_{ij}$ is assigned to machine $M_m$. For simplicity, we denote this action as $a_t$, and the complete schedule is represented as a sequence of actions: $\mathcal{A} = \{a_1, \cdots, a_t, \cdots, a_n\}$, where $n$ is the total number of scheduling steps (i.e., the total number of operations). In contrast to many existing neural approaches for FJSP, which introduce auxiliary notions such as free time or current time to prune the action space at each step~\citep{Song,DANIEL,WWW}, our framework avoids such heuristic constraints. The only restriction we impose is the natural precedence constraint between operations~\citep{L2D}.

\paragraph{Transition}
After taking action $a_t$, the environment transitions deterministically to a new state $s_{t+1}$, fully determined by the current state $s_t$ and action $a_t$. Specifically, operation and machine status are updated according to Eq.~(\ref{eq:FT}).

\paragraph{Reward}
Inspired by~\cite{L2D}, we use an estimated lower-bound makespan as the reward\footnote{In Appendix~\ref{app:add-exp}, we further compare this estimated lower-bound reward with two alternatives (estimated-mean and $\Delta$-makespan rewards).}. Before the scheduling process begins, the lower-bound finish time for each operation can be computed iteratively as: $\underline{FT}_{ij} = \underline{FT}_{i(j-1)} + \min_{m \in \mathcal{M}_{ij}} D_{ij}^m$, and $\underline{FT}_{\max} = \max_{(ij) \in \mathcal{O}} \underline{FT}_{ij}$, where $\underline{FT}_{ij}$ denotes the lower-bound finish time of operation $O_{ij}$, and $\underline{FT}_{\max}$ represents the estimated lower-bound makespan. During the scheduling process, we compute the estimated lower-bound finish time $\underline{FT}_{\max} \left(s_t\right)$ based on the current state $s_t$. The reward at step $t$ is defined as the negative difference between the estimated lower-bound makespan before and after the action: 
\begin{equation} \label{eq:reward}
    r_t = - (\underline{FT}_{\max} \left(s_{t+1}\right) - \underline{FT}_{\max} \left(s_t\right)).
\end{equation}

\subsection{Policy Network Architecture}
We design a decoder-only neural architecture~\citep{drakulic2023bq} to solve scheduling problems, consisting of two main modules: a feature extraction network that models structural and temporal information into embeddings, and a lightweight MLP-based policy head that makes scheduling decisions. The overall architecture is illustrated in Figure~\ref{fig:framework}.

\begin{figure*}[t]
\centering
\includegraphics[width=0.99\textwidth]{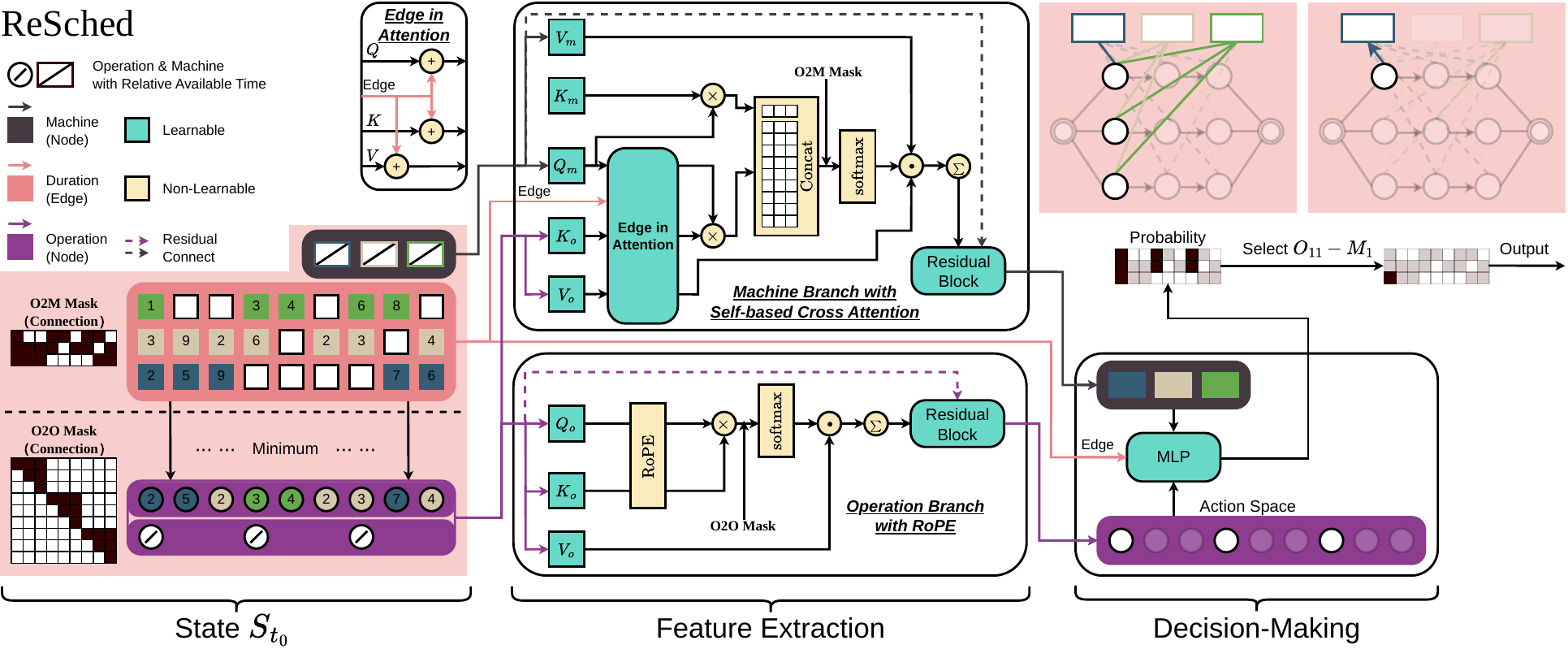}
\caption{
The \textsc{ReSched} framework. The state consists of four information, resulting in four key features (available time and minimum duration for operation; available time for machine; duration), and incorporates three (underlined) architectural enhancements for Transformer-based network. For decision-making module, we concatenate the embeddings of each feasible operation–machine pair, feed them into an MLP to obtain a score.
}

\label{fig:framework}
\end{figure*}

\subsubsection{Feature Extraction Network}
To effectively represent both structural and temporal aspects of the scheduling problem, we design a feature extraction network composed of two branches. The operation branch models O2O dependencies via self-attention, while the machine branch is built around cross-attention that aggregates operation information into machine embeddings. Both branches are built upon standard Transformer layers, with several targeted adaptations to better capture the unique scheduling characteristics.

\paragraph{Operation branch with RoPE}
In the absence of explicit positional encoding, intra-job order (i.e., O2O dependency) must be inferred implicitly across network layers, which is both inefficient and unreliable. To address this, we incorporate Rotary Positional Embedding (RoPE)~\citep{su2024roformer} into the \emph{operation} branch to directly model relative intra-job distances without introducing additional learnable parameters, as shown in the middle bottom of Figure~\ref{fig:framework}. Particularly, RoPE makes the similarity between query $q_a$ and key $k_b$ a function $g$ not only of their content embeddings $x_a$ and $x_b$, but also of their relative position $a - b$:
\begin{equation} \label{eq:rope}
    \langle \mathrm{RoPE}_q(x_a, a), \mathrm{RoPE}_k(x_b, b) \rangle = g(x_a, x_b, a - b).
\end{equation}

In scheduling, dependencies arise only within individual jobs. Operations from different jobs or machines can be permuted arbitrarily without affecting decision-making, rendering their relative positions irrelevant. In this sense, RoPE is exclusively applied within the operation branch, injecting positional awareness into intra-job attention patterns. Unlike index-based positional features, RoPE has been shown to provide stronger generalization and better structural encoding. This allows us to simplify the feature set while still preserving the essential sequential information at the job level.

\paragraph{Machine branch with Edge in Attention}
The \emph{machine} branch is designed to capture the structural interactions between operations and machines. A key aspect of this interaction is the processing duration, which is naturally defined on edges rather than belonging to either operation or machine nodes. To model this, we employ a cross-attention mechanism in which each operation attends to all of its candidate machines, incorporating both node-level and edge-level information. This design is illustrated in Figure~\ref{fig:framework} (upper branch of the Feature Extraction module).

Unlike prior cross-attention approaches~\citep{kwon2021matrix,drakulic2024goal}, which incorporate edge features indirectly by adjusting attention scores, our design integrates edge information directly by embedding it into the value vectors. This ensures that edge attributes influence not only the attention weights but also the final aggregated representations. Formally, for an operation node $O_{ij}$ and a machine node $M_m$, the attention is computed as:
\begin{equation} \label{eq:awe}
    \mathrm{Attention}(M_m, O_{ij}) =  \sigma \left( \frac{(q_m + q_{m,ij})^\top (k_{ij} + k_{m,ij})}{\sqrt{d}} \right) \cdot (v_{ij} + v_{m,ij}),
\end{equation}
where $\sigma$ denotes the softmax function, $q_m$ is the query from the machine node, $k_{ij}$ and $v_{ij}$ are the key and value from the operation node, and $q_{m,ij}, k_{m,ij}, v_{m,ij}$ are edge-specific projections derived from the duration $D_{ij}^m$. Since the number of operation–machine pairs scales with $|\mathcal{O}| \times |\mathcal{M}|$, learning independent projection parameters for each edge would be computationally prohibitive. To maintain efficiency, we share projection weights across all attention heads as well as across the query, key, and value projections, significantly reducing parameter count and memory usage.

\paragraph{Machine branch with Self-based Cross-attention}
In scheduling problems, the number of operations often exceeds the number of machines by an order of magnitude. This structural asymmetry leads to a severe information imbalance: each machine must aggregate information from a disproportionately large number of operations, i.e., often 10 times or more, which dilutes attention signals and destabilizes training. Inspired by the machine-node embedding aggregation in~\cite{Song}, we introduce a self-based cross-attention mechanism, where each machine node also attends to its own representation during attention weight computation (i.e., upper branch of the Feature Extraction module in Figure~\ref{fig:framework}). While residual connections inject self-information unconditionally after attention, the self-based formulation enables the model to assign a soft, adaptive attention weight to the machine’s own embedding. This helps preserve critical machine-level information in the presence of overwhelming inter-node messages. Formally, for a machine node $M_m$ with projected value vector $v_m$ and operation projected value vectors $v_{ij} \in \mathbb{R}^d$, the attention output $h_m'$ is defined as:
\begin{equation} \label{eq:self-based CA}
    h_m' = \alpha_{mm} v_m + \sum_{(ij) \in \mathcal{N}(M_m)} \alpha_{ij} v_{ij},
\end{equation} 
where $\alpha_{mm}$ is the attention weight assigned to the machine node itself, $\alpha_{ij}$ are the attention weights for operation nodes connected to $M_m$, and $\mathcal{N}(M_m)$ denotes the set of such operation nodes. To reduce parameters, we share the query, key, and value projection weights across machine nodes.

\subsubsection{Decision-Making}
Our decision-making module follows a standard policy network design adopted in prior DRL-based scheduling works~\citep{Song, DANIEL, WWW} based on operation-machine pairs. It consists of a multi-layer perceptron (MLP) that takes as input the operation and machine embeddings from the feature extraction network, along with the edge (i.e., duration) embeddings, and produces a scalar score for each feasible operation–machine pair. A softmax over these scores yields the final probability distribution. Notably, for simplicity, we do not include a global embedding in decision-making, as it shows limited effectiveness\footnote{We demonstrate in Appendix~\ref{app:exp}, using DANIEL~\citep{DANIEL} as an example, that removing its global embedding does not degrade performance.}. Additionally, as discussed in Section~\ref{sec:MDP}, we do not incorporate heuristic masking to prune the action space beyond the hard scheduling constraints (i.e., O2O dependency/O2M connection). 

However, unlike most prior works~\citep{L2D,Song,DANIEL,WWW} that adopt actor-critic frameworks such as Proximal Policy Optimization (PPO)~\citep{schulman2017proximal}, we leverage a simple REINFORCE algorithm~\citep{williams1992simple} to optimize the policy. Although vanilla REINFORCE is known to exhibit higher variance than actor–critic methods such as PPO, we deliberately adopt REINFORCE to isolate the impact of state and architecture design from algorithmic improvements.
This choice simplifies implementation and aligns with standard Transformer-based NCO practice, such as AM~\citep{kool2018attention} and POMO~\citep{kwon2020pomo}. It keeps the training pipeline minimal, without additional critic networks or auxiliary losses, and allows us to focus on the impact of the proposed state representation and architecture rather than on the choice of specific RL algorithm. (However, we also implemented the PPO version of our method for a more comprehensive evaluation, which is presented in Appendix~\ref{app:ppo-exp}).
The reward is defined (in Eq.~(\ref{eq:reward})) as the negative difference between the estimated lower-bound makespan before and after taking an action. Details of the training algorithm are provided in Appendix~\ref{app:exp}.

\section{Experiments}
In this section, we conduct extensive experiments on FJSP to demonstrate the effectiveness of \textsc{ReSched}, comparing it with strong baselines and performing ablations on its key components. As a generic framework, we also extend our evaluation to JSSP and FFSP to showcase its generality.

\paragraph{Training and Evaluation Settings}
We train \textsc{ReSched} on FJSP and two variants, JSSP and FFSP, respectively. For each problem, we generate one or two million instances for training, depends on the problem size. The models are trained on smaller problem sizes and evaluated on significantly larger ones as well as standard benchmarks: Bandimarte~\citep{brandimarte1993routing} and Hurink~\citep{hurink1994tabu} for FJSP, Taillard~\citep{Taillard_1993} and DMU~\citep{Demirkol_Mehta_Uzsoy_1998} for JSSP, and extended sizes up to 100×12 for FFSP\footnote{
    The notation $n \times m$ indicates $n$ jobs and $m$ machines(the number of operations per job varies across datasets and is omitted here for brevity).}. 
Notably, for both JSSP and FFSP, we use only a single training size (10×10 for JSSP and 20×12 for FFSP) to demonstrate generalization capability. Additionally, we evaluate the policies not only using a greedy strategy, but also with a sampling strategy. Following HGNN~\citep{Song} and DANIEL~\citep{DANIEL}, for each test instance we run 100 independent stochastic decoding trajectories of the policy in parallel, where at every decision step the next operation–machine pair is sampled from the network’s categorical output, and we report the solution with the smallest makespan among the 100 trajectories. Further details of the dataset and configurations are provided in Appendix~\ref{app:exp-config}. 

\paragraph{Baselines}
We compare \textsc{ReSched} against three groups of baselines. 
(i) Classical priority dispatching rules (PDRs), including FIFO, SPT, MOPNR, and MWKR. 
(ii) State-of-the-art DRL-based methods: HGNN~\citep{Song}, DANIEL~\citep{DANIEL}, and DOAGNN~\citep{WWW} for FJSP; L2D~\citep{L2D} and RL-GNN~\citep{Park_Chun_Kim_Kim_Park_2021} for JSSP; and MatNet~\citep{kwon2021matrix} for FFSP. 
(iii) Strong non-learning baselines, including the 2SGA genetic algorithm~\citep{rooyani2019efficient} tailored for FJSP and the CP-SAT solver from OR-Tools~\citep{da2019industrial}, which we apply to both FJSP and JSSP benchmarks.
More details are given in Appendix~\ref{app:exp}.

\subsection{Performance on FJSP}
\begin{table*}[t]
\centering
\caption{Results on datasets: in-distribution (top); out-of-distribution (middle); benchmark (bottom)}
\label{tab:FJSP}
\begin{threeparttable}

\begin{subtable}{\textwidth}
\centering
\resizebox{\textwidth}{!}{
\begin{tabular}{ccr|cccc|ccc|ccc|c}
\toprule
\multirow{2}{*}{\textbf{Dataset}} & \multirow{2}{*}{\textbf{Size}} & \multirow{2}{*}{\textbf{}} 
& \multicolumn{4}{c|}{\textbf{PDRs}} 
& \multicolumn{3}{c|}{\textbf{Greedy}} 
& \multicolumn{3}{c|}{\textbf{Sampling}} 
& \multirow{2}{*}{\textbf{OR-Tools}\textsuperscript{1}} \\
&&
& FIFO & SPT & MOPNR & MWKR
& HGNN & DANIEL & \textsc{ReSched}
& HGNN & DANIEL & \textsc{ReSched}
& \\
\midrule
\multirow{4}{*}{\rotatebox[origin=c]{90}{$\text{SD}_1$}}
& $10\times5$ & Gap(\%)$\downarrow$ & 24.06 & 34.76 & 19.87 & 17.58 & 16.03 & \textbf{10.87} & 12.25 & 9.66 & \textbf{5.57} & 5.98 & 96.32 (5\%)\\
& $20\times5$ & Gap(\%)$\downarrow$ & 14.87 & 22.56 & 13.85 & 11.51 & 12.27 & 5.03 & \textbf{4.63} & 10.31 & 2.46 & \textbf{2.33} & 188.15 (0\%)\\
& $15\times10$ & Gap(\%)$\downarrow$ & 28.65 & 38.22 & 20.68 & 19.41 & 16.33 & 12.42 & \textbf{6.51} & 12.13 & 6.79 & \textbf{3.09} & 143.53 (7\%)\\
& $20\times10$ & Gap(\%)$\downarrow$ & 19.22 & 30.25 & 12.20 & 10.30 & 10.15 & 1.31 & \textbf{0.48} & 9.64 & -1.03 & \textbf{-1.55} & 195.98 (0\%)\\
\midrule
\multirow{4}{*}{\rotatebox[origin=c]{90}{$\text{SD}_2$}}
& $10\times5$ & Gap(\%)$\downarrow$ & 76.47 & 57.96 & 72.52 & 70.01 & 71.42 & 25.68 & \textbf{16.36} & 49.71 & 12.57 & \textbf{6.39} & 326.24 (96\%)\\
& $20\times5$ & Gap(\%)$\downarrow$ & 74.59 & 38.91 & 74.58 & 71.31 & 76.79 & 11.52 & \textbf{9.87} & 60.70 & 4.66 & \textbf{3.68} & 602.04 (0\%)\\
& $15\times10$ & Gap(\%)$\downarrow$ & 132.23 & 86.74 & 125.32 & 121.45 & 115.26 & 57.16 & \textbf{18.14} & 101.52 & 38.70 & \textbf{9.81} & 377.17 (28\%)\\
& $20\times10$ & Gap(\%)$\downarrow$ & 135.27 & 78.82 & 129.09 & 124.98 & 126.12 & 31.58 & \textbf{14.18} & 114.15 & 19.13 & \textbf{7.90} & 464.16 (1\%)\\
\bottomrule
\end{tabular}
}
\end{subtable}

\begin{subtable}{\textwidth}
\centering
\resizebox{\textwidth}{!}{
\begin{tabular}{ccr|cc|cccccc|cccccc|c}
\toprule
\multirow{3}{*}{\textbf{Dataset}} & \multirow{3}{*}{\textbf{Size}} & \multirow{3}{*}{\textbf{}} 
& \multicolumn{2}{c|}{\textbf{Top PDRs}} 
& \multicolumn{6}{c|}{\textbf{Greedy}} 
& \multicolumn{6}{c|}{\textbf{Sampling}} 
& \multirow{3}{*}{\textbf{OR-Tools}} \\
&&
& \multirow{2}{*}{SPT} & \multirow{2}{*}{MWKR}
& \multicolumn{2}{c}{HGNN} & \multicolumn{2}{c}{DANIEL} & \multicolumn{2}{c|}{\textsc{ReSched}}
& \multicolumn{2}{c}{HGNN} & \multicolumn{2}{c}{DANIEL} & \multicolumn{2}{c|}{\textsc{ReSched}}
& \\
&&&
& & $10\times5$ & $20\times10$ & $10\times5$ & $20\times10$ & $10\times5$ & $20\times10$
& $10\times5$ & $20\times10$ & $10\times5$ & $20\times10$ & $10\times5$ & $20\times10$
& \\
\midrule
\multirow{2}{*}{\rotatebox[origin=c]{90}{$\text{SD}_1$}}
& $30\times10$ & Gap(\%)$\downarrow$ & 27.47 & 13.96 & 14.61 & 14.01 & 5.10 & \textbf{2.50} & 3.44 & 2.69 & 12.36 & 13.49 & 4.43 & 1.67 & 3.49 & \textbf{1.51} & 274.67 (6\%)\\
& $40\times10$ & Gap(\%)$\downarrow$ & 21.66 & 13.37 & 14.21 & 13.75 & 3.65 & \textbf{1.52} & 2.54 & 1.64 & 12.26 & 13.49 & 3.77 & 1.14 & 3.64 & \textbf{1.10} & 365.96 (3\%)\\
\midrule
\multirow{2}{*}{\rotatebox[origin=c]{90}{$\text{SD}_2$}}
& $30\times10$ & Gap(\%)$\downarrow$ & 59.74 & 122.89 & 126.55 & 123.57 & 14.85 & 11.95 & 8.79 & \textbf{6.30} & 115.21 & 111.51 & 9.47 & 4.80 & 3.59 & \textbf{1.40} & 692.26 (0\%)\\
& $40\times10$ & Gap(\%)$\downarrow$ & 38.74 & 108.66 & 109.87 & 108.12 & 0.52 & -1.67 & -2.40 & \textbf{-4.58} & 102.45 & 99.26 & -2.74 & -6.60 & -5.69 & \textbf{-7.61} & 998.39 (0\%)\\
\bottomrule
\end{tabular}
}
\end{subtable}

\begin{subtable}{\textwidth}
\centering

\resizebox{\textwidth}{!}{
\begin{tabular}{ccr|c|ccccccc|ccc}
\toprule

\multirow{2}{*}{\textbf{Strategy}}
&\multirow{2}{*}{\textbf{Dataset}}
&\multirow{2}{*}{} 
& MWKR & HGNN & HGNN & DANIEL & DANIEL & DOAGNN & \textsc{ReSched} & \textsc{ReSched}  
&\multirow{2}{*}{\textbf{2SGA}}
&\multirow{2}{*}{\textbf{OR-Tools}}
&\multirow{2}{*}{\textbf{UB}\textsuperscript{2}} \\

&&& (Top PDR) & $10\times5$ & $15\times10$ & $10\times5$ & $15\times10$ & $10\times5$ & $10\times5$ & $15\times10$ &&   \\

\midrule

\multirow{4}{*}{\rotatebox[origin=c]{90}{\textbf{Greedy}}}
& Brandimarte&Gap(\%)$\downarrow$&28.91&28.52&26.77&13.58&12.97&31.64&\textbf{9.08}\textsuperscript{3}&12.49&175.20(3.17\%)&174.20(1.5\%)&172.7\\
&Hurink(edata)&Gap(\%)$\downarrow$&18.6&15.53&15.0&16.33&\textbf{14.41}&16.21&15.48&16.34&-&1028.93(-0.03\%)&1028.88\\
&Hurink(rdata)&Gap(\%)$\downarrow$&13.86&11.15&11.14&11.42&12.07&11.83&\textbf{10.18}&10.31&-&935.80(0.11\%)&934.28\\
&Hurink(vdata)&Gap(\%)$\downarrow$&4.22&4.25&4.02&3.28&3.75&4.32&3.48&\textbf{2.55}&812.20(0.39\%)&919.60(-0.01\%)&919.50\\

\midrule

\multirow{4}{*}{\rotatebox[origin=c]{90}{\textbf{Sampling}}}
&Brandimarte&Gap(\%)$\downarrow$&28.91&18.56&19.0&9.53&8.95&18.62&\textbf{6.61}&8.14&175.20(3.17\%)&174.20(1.5\%)&172.7\\
&Hurink(edata)&Gap(\%)$\downarrow$&18.6&8.17&8.69&9.08&8.72&8.46&\textbf{8.13}&10.39&-&1028.93(-0.03\%)&1028.88\\
&Hurink(rdata)&Gap(\%)$\downarrow$&13.86&5.57&5.95&4.95&5.49&5.83&5.04&\textbf{4.92}&-&935.80(0.11\%)&934.28\\
&Hurink(vdata)&Gap(\%)$\downarrow$&4.22&1.32&1.34&\textbf{0.69}&0.72&1.44&0.82&\textbf{0.69}&812.20(0.39\%)&919.60(-0.01\%)&919.50\\

\bottomrule
\end{tabular}
}

\end{subtable}

\begin{tablenotes}
\footnotesize
\item[] \raggedright
1. OR-Tools (1800s per instance): solution and optimal ratio reported; \\
2. UB is the best-known solution~\citep{behnke2012test}, used as the baseline to compute gaps; \\
3. \textbf{Instance-wise average gap} is reported to reduce bias from varying instance scales.
\end{tablenotes}
\end{threeparttable}
\end{table*}
\paragraph{In-Distribution Performance}
Table~\ref{tab:FJSP} shows \textsc{ReSched} outperforms all baselines on both $\text{SD}_1$~\citep{Song} and $\text{SD}_2$~\citep{DANIEL} datasets, achieving superior results in \textbf{14/16} cases. The advantage is most pronounced on challenging $\text{SD}_2$ instances, where \textsc{ReSched} reduces the gap by \textbf{30\%} versus DANIEL (15×10 case). Even on simpler $\text{SD}_1$ instances, it maintains consistent improvements, cutting DANIEL's gap by half in 15×10 and 20×10 settings, demonstrating robust performance across difficulty levels.

\paragraph{Generalization Performance}
\textsc{ReSched} demonstrates strong generalization on both synthetic and benchmark datasets (Table~\ref{tab:FJSP}). On larger synthetic instances (30×10 to 40×10), it outperforms DRL baselines in \textbf{6/8} cases, even surpassing OR-Tools by \textbf{7.61\%} on the challenging $\text{SD}_2$ with 40×10 setting. For Brandimarte and Hurink benchmarks, trained solely on $\text{SD}_1$, \textsc{ReSched} achieves best performance in \textbf{7/8} cases across both strategies. Notably, the DOAGNN does not report results for the 15$\times$10 setting, and is thus only compared using one configuration. Compared with strong non-learning baselines, 2SGA and OR-Tools (with 3600s time limit) can often obtain solutions very close to the best-known upper bounds, but at the cost of substantially longer computation time due to lengthy population-based search or exact optimization. In contrast, once trained, \textsc{ReSched} produces competitive solutions within a short inference time per instance (see Figure~\ref{fig:running_time}), and its performance can be further improved by simple sampling-based decoding without additional training cost, making it more suitable for time-sensitive or repeatedly solved scheduling scenarios. These results suggest \textsc{ReSched}'s potential for generalization to real-world settings with complex characteristics under practical time budgets.

\paragraph{Running Time Analysis}
\begin{figure}[t]
    \centering
    \begin{minipage}[c]{0.48\textwidth}
        \centering
        \captionof{table}{Results on Taillard Benchmark for JSSP.}
        \label{tab:JSSP-Taillard}
        \resizebox{0.99\textwidth}{!}{\begin{tabular}{c|cccc|ccc|cc}
\toprule

\multirow{3}{*}{\textbf{Size}} 
& \multicolumn{4}{c|}{\textbf{PDRs}}
& \multicolumn{3}{c|}{\textbf{DRL-based}}
& \multirow{3}{*}{\textbf{OR-Tools}}
& \multirow{3}{*}{\textbf{UB}} \\

& \multirow{2}{*}{SPT} & \multirow{2}{*}{MWKR} & \multirow{2}{*}{FDD/MWKR} & \multirow{2}{*}{MOPNR} & \multirow{2}{*}{L2D}
& \multirow{2}{*}{RL-GNN} & ReSched & \\

&&&&&&& $10\times10$& \\

\midrule
$15\times15$&  54.8&56.7&47.1&45.0&26.0&20.1&\textbf{15.74}&0.1&1233.9\\
$20\times15$&  65.2&60.7&50.6&47.7&30.0&24.9&\textbf{19.7}&0.2&1361.3\\
$20\times20$&  64.2&55.7&47.6&42.8&31.6&29.2&\textbf{16.3}&0.7&1617.1\\
$30\times15$&  61.6&52.6&45.0&45.6&33.0&24.7&\textbf{21.5}&2.1&1771.2\\
$30\times20$&  66.0&63.9&56.3&48.2&33.6&32.0&\textbf{22.5}&2.8&1919.4\\

\midrule
$50\times15$&  51.4&40.9&34.8&30.1&22.4&15.9&\textbf{16.1}&0.0&2783.8\\
$50\times20$&  59.5&53.9&41.5&37.9&26.5&21.3&\textbf{15.6}&2.8&2834.4\\
$100\times20$&  41.0&32.9&23.4&20.2&13.6&\textbf{9.2}&9.6&3.9&5369.6\\

\bottomrule
\end{tabular}
}
    \end{minipage}
    \hfill
    \begin{minipage}[c]{0.49\textwidth}
        \centering
        \includegraphics[width=0.99\columnwidth]{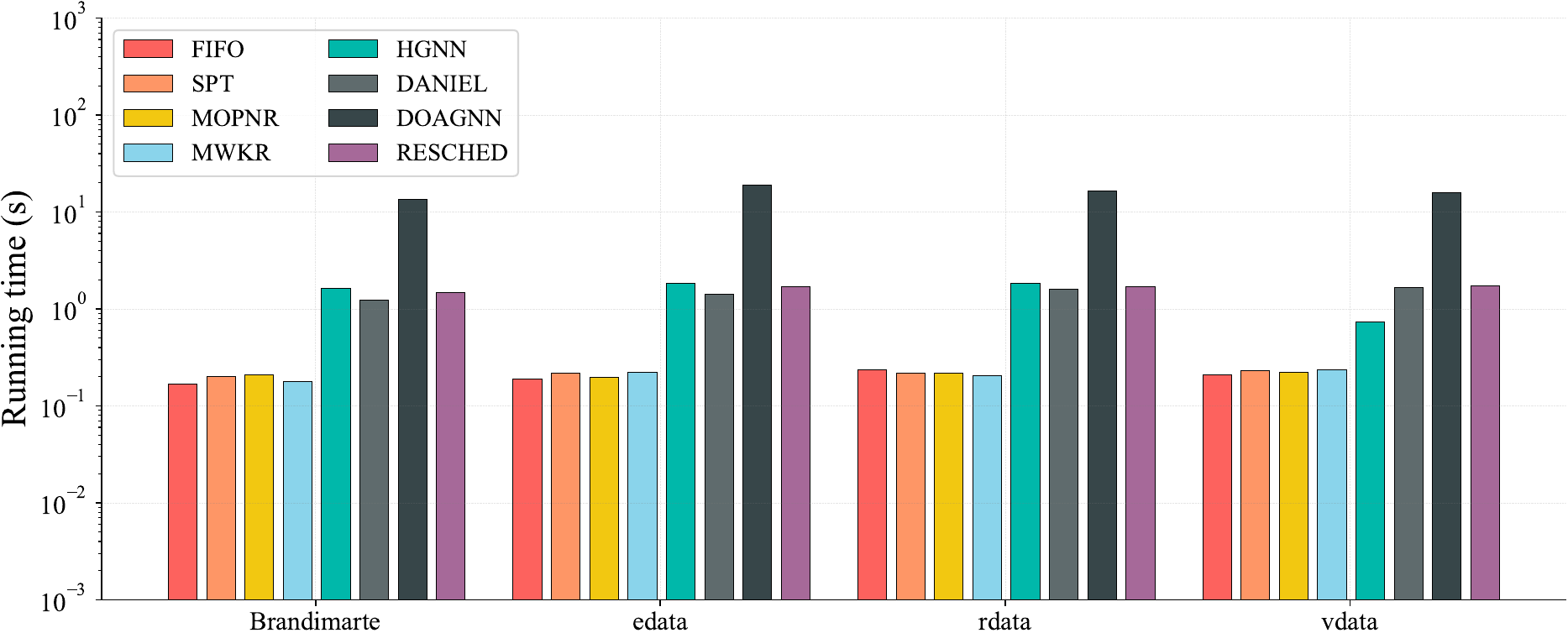}
        \caption{Running Time on FJSP Benchmark}
        \label{fig:running_time}
    \end{minipage}
\end{figure}
To evaluate runtime efficiency, we conduct experiments on open benchmark instances characterized by diverse problem structures. Each algorithm is independently executed five times to ensure reliable results, and their average running times are reported in Figure~\ref{fig:running_time}. Notably, our method achieves a runtime comparable to existing DRL-based approaches while outperforming current state-of-the-art methods in terms of scheduling quality. 

\subsection{Ablation Study and Robustness Analysis}
We conduct ablation studies on \textsc{ReSched}'s two key innovations: (1) the minimal representation and (2) attention-based architectural improvements, by removing each design element individually. Results in Table~\ref{tab:FJSP-abl-appendix} (Appendix~\ref{app:exp}) confirm that all components contribute positively to performance. Additionally, we perform robustness analysis by training \textsc{ReSched} across multiple random seeds to assess its stability and consistency. The results, presented in Table~\ref{tab:FJSP-Robust} (Appendix~\ref{app:exp}), demonstrate that the variation across seeds is minimal (less than 1\%), further confirming that \textsc{ReSched}'s improvements are both robust and consistent.

\subsection{Performance on JSSP and FFSP}
We evaluate \textsc{ReSched}'s generalization capability on the Taillard benchmark~\citep{Taillard_1993} for JSSP.
Trained solely on synthetic 10$\times$10 instances (generated under the same distribution as~\cite{L2D}), our model is directly tested on benchmark instances ranging from 15$\times$15 to 100$\times$20 by greedy strategy. The results in Table~\ref{tab:JSSP-Taillard} show that, \textsc{ReSched} outperforms L2D\footnote{We only report best OOD setting for L2D for short} and RL-GNN in 7 out of 8 test sizes, even surpassing their in-distribution performance (trained and tested on the same size). For reference, we also report the results of the CP-SAT solver in OR-Tools~\citep{da2019industrial} with a 3600-second time limit per instance, which provides strong upper bounds on the Taillard instances.
This demonstrates exceptional scalability, as no size-specific tuning is required, which exhibits our framework's ability to adapt to different scheduling problems. Results on in-distribution settings and the DMU benchmark, when evaluated with a greedy decoding strategy, also confirmed its consistent superiority (see Appendix~\ref{app:exp}). We further evaluate \textsc{ReSched} on FFSP under MatNet's setting~\citep{kwon2021matrix}. Unlike MatNet, which trains a separate model for each size (20/50/100), \textsc{ReSched}, trained only on size 20, achieves the best results in most settings across sizes under both greedy and sampling strategies (Table~\ref{tab:FFSP}). Same as MatNet, we also use 24 parallel solutions per instance, which showed superior generalization.

\section{Conclusion}
In this paper, we present \textsc{ReSched}, a novel framework for solving scheduling problems using deep reinforcement learning. \textsc{ReSched} introduces a simplified state representation and a Transformer-based architecture, which effectively captures the structural and temporal characteristics of scheduling problems. Our extensive experiments on FJSP, JSSP, and FFSP demonstrate that \textsc{ReSched} achieves favorable performance while maintaining high efficiency. The results highlight the potential of \textsc{ReSched} as a generic framework for various scheduling tasks. However, \textsc{ReSched} currently encodes only O2O/O2M interaction with self-attention/unidirectional cross-attention; explicit machine to operation (M2O) feedback is not yet modeled and will be explored in future work.

\section*{Acknowledgements}
This research is supported by the National Research Foundation, Singapore under its AI Singapore Programme (AISG Award No: AISG3-RP-2022-031). This work is also supported by the National Natural Science Foundation of China (No.~62473233). 

\bibliography{iclr2026_conference}
\bibliographystyle{iclr2026_conference}

\clearpage  

\appendix

\section{State Representation: From FJSP to other variants}

\subsection{Revisiting the FSJP Formulation} \label{app:formulation}
In FJSP, for each operation node $O_{ij}$, we define:
\begin{itemize} 
    \item $ST_{ij}$: the start time of operation $O_{ij}$;
    \item $D_{ij}^m$: the duration of operation $O_{ij}$ on machine $m$;
    \item $FT_{ij}$: the finish time of operation $O_{ij}$.
\end{itemize}
For each machine node $M_m$, we define:
\begin{itemize}
    \item $AT^m_t$: the available time of machine $m$ at the current scheduling step.
\end{itemize}

Whenever $a_{t, ij}^m = 1$, operation node $O_{ij}$ may start only satisfies the following two constraints: the operation dependency constraints and the machine availability constraints. Then the finish time $FT_{ij}$ and start time $ST_{ij}$ can be computed as follows:
\begin{equation} \label{eq:FTST2}
    \begin{aligned}
        FT_{ij} & = ST_{ij} + D_{ij}^m, \quad \text{if } a_{t, ij}^m = 1 \\
        ST_{ij} & = \max \left( FT_{i(j-1)}, AT^m_t \right)
    \end{aligned}
\end{equation}
where $FT_{i(j-1)}$ is the finish time of predeccessor operation $O_{i(j-1)}$ in the job $J_i$.

For the machine nodes, the available time $AT^m_t$ at step $t$ is updated as follows:
\begin{equation} \label{eq:mAT2}
    AT^m_t =
    \begin{cases}
        FT_{i'j'} & \text{if } a_{t-1, i'j'}^m = 1 \\
        AT^m_{t-1} & \text{otherwise}
    \end{cases}
\end{equation}
where $(i'j')$ is the operation assigned to machine $m$ at step $t-1$.

Finally, for a scheduling problem like FJSP, we aim to optimize the solution $\mathcal{A}$ to minimize the makespan $FT_{\max}$, which is defined as the maximum finish time across all operations:
\begin{equation}
    FT_{\max} = \max_{(ij) \in \mathcal{O}} FT_{ij}.
\end{equation}

\begin{remark}
    As we analysed in the Proposition 1, it is unnecessary to explicitly retain the full history of past states. This means we do not need to directly track the finish times of operations scheduled in previous steps. 
    However, since the finish time of an operation $O_{ij-1}$ serves as the available time for its successor $O_{ij}$, we can instead maintain the \textbf{operation available time} a quantity that captures the same information in a recursive manner. Thus the Eq.~(\ref{eq:FTST2}) can be simplified as Eq.~(\ref{eq:FT}) in the main text.
\end{remark}

Using the above formulation, for a given FJSP instance and feasible solution $\mathcal{A}$, we can compute each operation's status and machine's status at each scheduling step. 

\subsection{From FJSP to Other Scheduling Problems} \label{app:other-variant}
From heterogeneous graph perspective, FJSP provides a unified formulation that naturally extends to two classical variants: JSSP and FFSP.

\paragraph{JSSP as a special case.}
In JSSP, each operation is tied to exactly one machine rather than a set of machines. Consequently, the duration $D_{ij}^m$ and schedule $a_{t,ij}^m$ degenerate to $D_{ij}$ and $a_{t,ij}$, respectively. The O2O dependencies remain unchanged, whereas the O2M connections become one-hot. The state representation and update rules are the same as in FJSP, with trivial O2M connections.

\paragraph{FFSP as a special case.}
In FFSP, all jobs follow an identical sequence of stages, i.e.\ an identical routing with one operation per stage. In this context, an ``operation'' can be viewed as a \emph{stage}. Each stage $j$ is executed at a \textit{station}, and under the flexible setting, a station is typically composed of multiple parallel machines capable of performing the same task. Hence, FFSP can be viewed as an FJSP with $\mathcal{M}_{ij}=\mathcal{M}_j$ for all jobs $i$. The O2O dependencies reduce to stage-to-stage precedence, while the O2M connections remain similar to FJSP, where each stage is connected to its corresponding station's machines.

\paragraph{Implications for our framework.}
Across these variants, the core state representation and update rules remain consistent with those of FJSP. The O2O dependencies are preserved; differences arise only in the pattern of O2M connections, reflecting each problem's machine-availability constraints. Figure~\ref{fig:jssp-ffsp-ins} illustrates both variants with identical O2O dependencies and differing machine-availability constraints.

\begin{figure}[t]
\centering
\includegraphics[width=0.99\columnwidth]{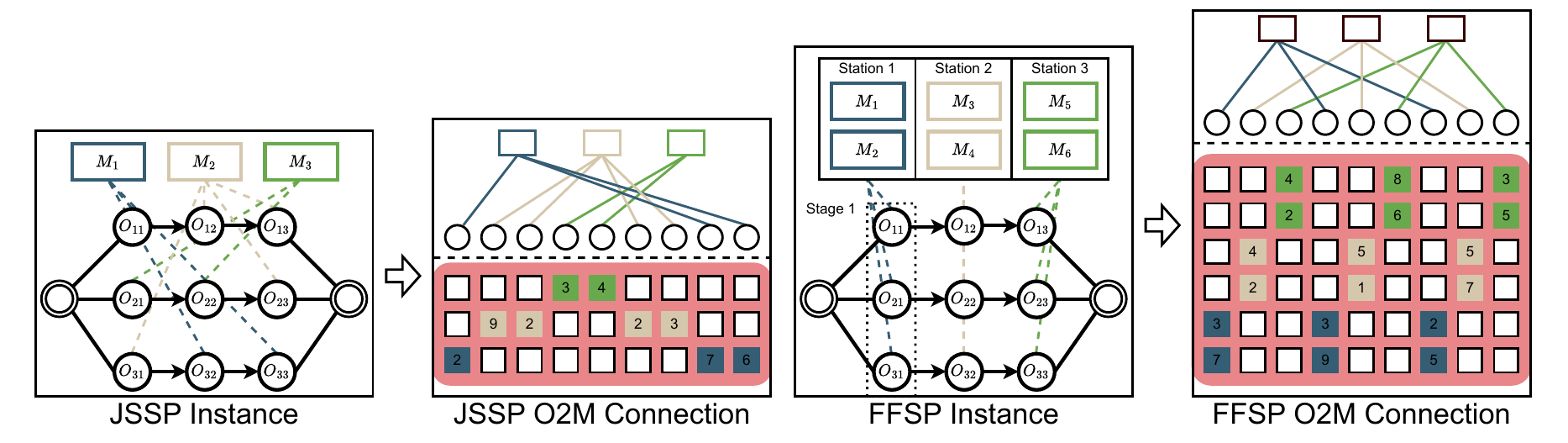}
\caption{The illustration of JSSP and FFSP instances, respectively.}
\label{fig:jssp-ffsp-ins}
\end{figure}

\section{Training Algorithm and Experiment}

\subsection{Training Algorithm}
We introduce the training algorithm for \textsc{ReSched} in Algorithm~\ref{alg:reinforce}. The training process follows the REINFORCE algorithm, where we sample actions from the policy network and compute the policy loss based on the rewards received. The model parameters are updated using gradient descent.

\begin{algorithm}[ht]
\caption{Training \textsc{ReSched} with REINFORCE}
\label{alg:reinforce}
\begin{algorithmic}[1]
\STATE {\bfseries Input:} Scheduling environment $\mathcal{E}$, model parameters $\theta$, number of epochs $E$, training episodes $N$, batch size $B$, learning rate $\alpha$;
\FOR{epoch $e = 1$ to $E$}
    \STATE Initialize score and loss meter;
    \WHILE{$ep < N$}
        \STATE Generate $B$ instances using environment $\mathcal{E}$;
        \STATE Reset the environment to get initial state $s_0$;
        \STATE Initialize empty trajectory: $\mathcal{T} = \emptyset$;
        \REPEAT
            \STATE Process current state $s_t$ into model input;
            \STATE Sample action $a_t \sim \pi_\theta(\cdot \mid s_t)$;
            \STATE Execute $a_t$ to get reward $r_t$ and next state $s_{t+1}$;
            \STATE Store $(s_t, a_t, r_t, s_{t+1})$ into $\mathcal{T}$;
            \STATE $s_t \gets s_{t+1}$;
        \UNTIL{task is finished}
        \STATE Compute return $G_t$ using discounted cumulative rewards;
        \STATE Normalize advantages: $A_t = G_t - \text{mean}(G_t)$;
        \STATE Compute policy loss $\mathcal{L} = -\sum_t A_t \log \pi_\theta(a_t \mid s_t)$;
        \STATE Update parameters: $\theta \leftarrow \theta - \alpha \nabla_\theta \mathcal{L}$;
        \STATE $ep \gets ep + B$;
    \ENDWHILE
    \STATE Validate $\pi_\theta$ on validation set;
\ENDFOR
\STATE {\bfseries Output:} Trained model parameters $\theta$
\end{algorithmic}
\end{algorithm}

\subsection{Experiment Details}\label{app:exp-config}

\paragraph{Datasets}
For FJSP, the model is trained on synthetic datasets $\text{SD}_1$~\citep{Song} and $\text{SD}_2$~\citep{DANIEL}, to evaluate its in-distribution performance, and then evaluated on larger size as well as standard benchmark: Bandimarte~\citep{brandimarte1993routing} and Hurink~\citep{hurink1994tabu}. The synthetic datasets are generated using the same method as in~\cite{Song} and~\cite{DANIEL}.
Specifically, for an instance of FJSP with $n$ jobs and $m$ machines:
\begin{itemize}
    \item $\text{SD}_1$: Duration $D_{ij}^m$ is uniformly sampled from $[1, 20]$; Each job's operations number is uniformly sampled from $[0.8n, 1.2n]$; 
    \item $\text{SD}_2$: Duration $D_{ij}^m$ is uniformly sampled from $[1, 99]$; Each job's operations number is uniformly sampled from $[1, n]$.
\end{itemize}

For JSSP, the model is trained on the synthetic dataset to evaluate its in-distribution performance, and then evaluated on the standard benchmark: Taillard~\citep{Taillard_1993} and DMU~\citep{Demirkol_Mehta_Uzsoy_1998}. The synthetic dataset is generated as follows: Duration $D_{ij}$ is uniformly sampled from $[1, 99]$; Each job's operations number is uniformly sampled from $[1, n]$.

For FFSP, the model is trained on the synthetic dataset generated with the same method as in~\cite{kwon2021matrix}, to evaluate its in-distribution performance and cross-size generalization performance. The synthetic dataset is generated as follows: Duration $D_{ij}^m$ is uniformly sampled from $[2, 9]$; Each job has 3 stages, and each stage has 4 parallel machines as a station.

\paragraph{Configurations}
The \textsc{ReSched} framework is implemented in PyTorch~\citep{paszke2019pytorch} and trained using the REINFORCE algorithm~\citep{williams1992simple}. The feature extraction network consists of 2-layer Transformer blocks, each with 8 attention heads, a hidden dimension of 128, and a feed-forward dimension of 512. The decision-making network is a 3-layer MLP, with each layer containing 64 hidden units, following prior works~\citep{L2D, Song, DANIEL, WWW}. We use the Adam optimizer with a learning rate of $5 \times 10^{-5}$. The model is trained for 2000 epochs, with 1000 training instances per epoch and a batch size of 50. The discount factor $\gamma$ is set to 0.99. Due to resource constraints, for lager sized problem, the number of training instances per epoch is reduced to 500 and the batch size to 24. During training, we use the estimated lower bound of the makespan as the reward, as described in Eq.~(\ref{eq:reward}), with a discount factor of 0.99. The model that achieves the best performance on the validation set is saved and later evaluated on the test set (100 generated instances) and benchmark datasets.
We use a single NVIDIA RTX A40 GPU for training and evaluation. We will release the code and data generation scripts.

\paragraph{Hyperparameter tuning and reporting}
We tune \textsc{ReSched} on the $SD_1$ $10\times5$ setting and \emph{keep the selected hyperparameters fixed} for all other datasets and sizes. For baselines, we do not perform additional tuning beyond the authors' default settings. We either directly report the results from their original papers or use their open-source code with the provided hyperparameters and checkpoints to ensure a fair comparison. All baselines have open-source implementations; some also provide checkpoints.

\paragraph{Baselines}
To assess the effectiveness of \textsc{ReSched}, we compare it against both rule-based and DRL-based baselines commonly adopted in the FJSP literature. The baselines are grouped into two categories:

\textbf{(1) Priority Dispatching Rules (PDRs).}  
We include four widely used heuristic rules:
\begin{itemize}
    \item \textbf{FIFO} (First-In-First-Out): Selects the earliest operation–machine pair based on the order of arrival.
    \item \textbf{SPT} (Shortest Processing Time): Selects the pair with the shortest operation duration.
    \item \textbf{MOPNR} (Most Operations Remaining): Selects the pair associated with the job that has the largest number of remaining operations, breaking ties by the earliest machine available time.
    \item \textbf{MWKR} (Most Work Remaining): Selects the pair associated with the job that has the largest total remaining processing time, using the average duration of successor operations, and breaks ties by the earliest machine available time.
\end{itemize}
These heuristics are widely adopted due to their efficiency, simplicity, and strong generalization ability, particularly in large-scale or unseen scheduling instances~\citep{sels2012comparison}. We implement these PDRs using the open-source code by~\cite{Song} and keep the same hyperparameter settings.

\textbf{(2) Neural methods for FJSP}  
We also compare \textsc{ReSched} against three representative GNN-based approaches developed for FJSP:
\begin{itemize}
    \item \textbf{HGNN}~\citep{Song}: Models the scheduling problem as a heterogeneous graph, where operations and machines are treated as distinct node types.
    \item \textbf{DANIEL}~\citep{DANIEL}: Employs a dual-attention mechanism to jointly capture operation–machine interactions.
    \item \textbf{DOAGNN}~\citep{WWW}: Leverages a decoupled disjunctive-graph formulation to better encode precedence and machine constraints.
\end{itemize}
All three methods are DRL frameworks based on GNN or variants (e.g., GAT)~\citep{velickovic2017graph}, and using the PPO~\citep{schulman2017proximal} as training algorithm. For HGNN and DANIEL, we directly report the results from their original papers. For DOAGNN, we use their open-source code and evaluate it on the open benchmark using their provided checkpoints. Additionally, we retrain DANIEL from scratch in our ablation study to validate our simplified feature set, maintaining the same hyperparameter settings as in the original work.

\textbf{(3) Neural methods for other variants}
Additionally, we compare \textsc{ReSched} against two GNN-based methods targeting JSSP; and a transformer-based method for FFSP:
\begin{itemize}
    \item \textbf{L2D}~\citep{L2D}: Proposes an end-to-end DRL framework that learns size-agnostic priority dispatching rules for JSSP, based on disjunctive graph representation and Graph Isomorphism Networks.
    \item \textbf{RL-GNN}~\citep{Park_Chun_Kim_Kim_Park_2021}: Integrates RL with a GNN-based encoder-decoder network to solve JSSP, capturing both job and machine contexts through dynamic graph representations.
    \item \textbf{Matnet}~\citep{kwon2021matrix}: Introduces a matrix-based encoding of combinatorial structures for routing problems and FFSP, enabling flexible attention across decision steps. It applies a transformer-style architecture to scheduling by encoding instance states as 2D matrices and training via REINFORCE.
\end{itemize}
For L2D and RL-GNN, we directly report the results from their original papers. For Matnet, we use their open-source code to retrain and evaluate it under the default hyperparameters.

\subsection{Experimental Results}\label{app:exp}

\paragraph{Ablation Study on DANIEL}
The input embedding for decision-making module in DANIEL is the concatenation of the \emph{operation}, \emph{machine}, \emph{pair} and \emph{global} embeddings, involving a total of \textbf{26} features (10 for operation, 8 for machine, 8 for pair); the global embedding is learned and does not add additional raw features. To evaluate the effectiveness of each component, we conduct an ablation study by removing the components one by one. The experiments are conducted on the synthetic dataset $\text{SD}_2$ with $15\times10$, and the results are shown in Table~\ref{tab:DANIEL}.

\begin{table}[ht]
\centering
\caption{Ablation study on DANIEL.}
\resizebox{0.6\textwidth}{!}{
\begin{tabular}{c|cc}
\toprule

\textbf{setting}
&Obj.$\downarrow$
&Avg. $\Delta(\%)\downarrow$ \\

\midrule

DANIEL&589.44&56.28\\
\midrule
Del Global&\textcolor{green!50!black}{589.12}&\textcolor{green!50!black}{56.19}\\
Del Global MA&\textcolor{green!50!black}{588.66}&\textcolor{green!50!black}{56.07}\\
Del Gloabl MA P(-P6)&\textcolor{green!50!black}{588.76}&\textcolor{green!50!black}{56.10}\\
Del Gloabl MA P(-P6) O(-O2379)&\textcolor{green!50!black}{587.68}&\textcolor{green!50!black}{55.81}\\

\bottomrule
\end{tabular}
}

\captionsetup{justification=raggedright,singlelinecheck=false}
\label{tab:DANIEL}
\end{table}

We conduct an ablation study on DANIEL to analyze the impact of different input embeddings. The results in Table~\ref{tab:DANIEL} reveal the following insights:
\begin{itemize}
    \item \textbf{Del Global:} Removing the \emph{global} embedding leads to a slight improvement, indicating that this component may introduce redundancy or noise.
    \item \textbf{Del Global MA:} Further removing the \emph{machine} embedding, comprising 8 machine-related features, results in continued improvement, suggesting that the model can perform well without explicit machine descriptors.
    \item \textbf{Del Global MA P(-P6):} Based on the previous experiment, we further prune the \emph{pair-wise} features from 8 to a single (6th) feature, which still maintains performance. This indicates that most pair features are not essential for effective scheduling.
    \item \textbf{Del Global MA P(-P6) O(-O2379):} Afterwards, based on all the previous experiments, we prune the \emph{operation} features from 10 to 4 (retaining only features 2, 3, 7, and 9), and the performance is still comparable to the original DANIEL. This further validates that the full feature set is not strictly necessary for effective scheduling.
\end{itemize}

In the last step above, we eliminate the number of features in DANIEL \emph{from 26 to only 5} (4 for operation, i.e. O2379, and 1 for pair-wise, i.e. P6), and the performance is still comparable to the original DANIEL. 

\begin{table}[ht]
    \centering
    \caption{Ablation study on \textit{ReSched}.}
    \resizebox{0.90\textwidth}{!}{\begin{tabular}{cr|c|cc|c|c}
\toprule

\multirow{2}{*}{\textbf{Ablation}} 
& \multirow{2}{*}{\textbf{}} 
& \textbf{In-Distribution}
& \multicolumn{2}{c|}{\textbf{Out-of-Distribution}}
& \multicolumn{1}{c|}{\textbf{Open Benchmark}} 
& \multirow{2}{*}{\textbf{Avg. $\Delta(\%)\downarrow$}\textsuperscript{1}} \\

&
& $10\times5$  & $30\times10$ & $40\times10$ & Brandimarte &  \\

\midrule\midrule

\multirow{2}{*}{\textbf{\textsc{ReSched}}}
&Gap(\%)$\downarrow$&\textbf{12.25}&3.44&\textbf{2.54}&\textbf{9.08}&\multirow{2}{*}{\textbf{+0.00}}\\
&$\Delta(\%)\downarrow$&0.00&+0.10&0.00&0.00&\\

\midrule\midrule

\multirow{2}{*}{\textbf{Connection}}
&Gap(\%)$\downarrow$&16.42&6.74&3.83&14.48&\multirow{2}{*}{+3.54}\\
&$\Delta(\%)\downarrow$&\textcolor{red}{+4.17}&\textcolor{red}{+3.30}&\textcolor{red}{+1.29}&\textcolor{red}{+5.40}&\\

\cmidrule{1-7}
\multirow{2}{*}{\textbf{Relative Available Time}}
&Gap(\%)$\downarrow$&13.33&5.02&3.77&14.84&\multirow{2}{*}{+2.41}\\
&$\Delta(\%)\downarrow$&\textcolor{red}{+1.08}&\textcolor{red}{+1.58}&\textcolor{red}{+1.23}&\textcolor{red}{+5.76}&\\

\cmidrule{1-7}
\multirow{2}{*}{\textbf{Current Time}}
&Gap(\%)$\downarrow$&13.49&3.38&2.73&12.71&\multirow{2}{*}{+1.25}\\
&$\Delta(\%)\downarrow$&\textcolor{red}{+1.24}&\textcolor{green!50!black}{-0.06}&\textcolor{red}{+0.19}&\textcolor{red}{+3.63}&\\

\cmidrule{1-7}
\multirow{2}{*}{\textbf{RoPE}}
&Gap(\%)$\downarrow$&12.81&\textbf{3.34}&2.75&14.97&\multirow{2}{*}{+1.64}\\
&$\Delta(\%)\downarrow$&\textcolor{red}{+0.56}&\textcolor{green!50!black}{-0.10}&\textcolor{red}{+0.21}&\textcolor{red}{+5.89}&\\

\cmidrule{1-7}
\multirow{2}{*}{\textbf{Edge in Att}}
&Gap(\%)$\downarrow$&12.56&4.28&3.44&13.02&\multirow{2}{*}{+1.50}\\
&$\Delta(\%)\downarrow$&\textcolor{red}{+0.31}&\textcolor{red}{+0.84}&\textcolor{red}{+0.90}&\textcolor{red}{+3.94}&\\

\cmidrule{1-7}
\multirow{2}{*}{\textbf{Self-based CA}}
&Gap(\%)$\downarrow$&13.16&4.67&3.72&16.10&\multirow{2}{*}{+2.59}\\
&$\Delta(\%)\downarrow$&\textcolor{red}{+0.91}&\textcolor{red}{+1.23}&\textcolor{red}{+1.18}&\textcolor{red}{+7.02}&\\

\bottomrule
\end{tabular}
}
    \begin{tablenotes}
    \footnotesize
    \item[1] $\Delta(\%)$ indicates the gap deviation from standard \textsc{ReSched}.
    \item[2] Avg. $\Delta$ summarizes overall performance deterioration across all datasets.
    \end{tablenotes}
\captionsetup{justification=raggedright,singlelinecheck=false}
\label{tab:FJSP-abl-appendix}
\end{table}
\paragraph{Ablation Study on \textsc{ReSched}}
To evaluate the effectiveness of each proposed component in \textsc{ReSched}, we conduct comprehensive ablation studies to analyze our simplified state representation and network architecture. Specifically, we individually remove six key ideas from our standard framework, including simplifications and attention-related improvements. The ablation settings are illustrated as follows:
\begin{itemize}
    \item \textbf{Connection:} Removing the O2O connection, and using the original bidirectional connection, which are widely used in previous works~\citep{L2D, Song, DANIEL, WWW}.
    \item \textbf{Relative Available Time:} Replacing the relative available time with the absolute available time (for operation and machine).
    \item \textbf{Current Time:} Using the absolute current time to prune the action space as in previous works~\citep{Song, DANIEL, WWW}.
    \item \textbf{RoPE:} Removing the RoPE mechanism in operation branch.
    \item \textbf{Edge in Att:} Removing the edge features in cross-attention mechanism in machine branch.
    \item \textbf{Self-based CA:} Removing the self-based cross-attention mechanism in machine branch.
\end{itemize}

All variants are trained on the smallest $\text{SD}_1$ dataset (10$\times$5) and evaluated on four test settings, including in-distribution, out-of-distribution (30$\times$10 and 40$\times$10), and a challenging open benchmark (Bandrimarte). As shown in Table~\ref{tab:FJSP-abl-appendix}, each component contributes to the overall performance, confirming the effectiveness and necessity of our design choices.

\paragraph{Robustness Evaluation under Multiple Random Seeds}
To verify the robustness of our framework, we reran \textsc{ReSched} (REINFORCE version) on the $\text{SD}_1$-10$\times$5 training setting with three additional independent runs using different random seeds (four runs in total, including the originally reported in Table~\ref{tab:FJSP}). 
For each run, we evaluated the policy across in-distribution, out-of-distribution, and open benchmark test settings to comprehensively test the performance of \textsc{ReSched}.
For ease of comparison, Table~\ref{tab:FJSP-Robust} reports in the second column the average over all four independent runs (the originally reported run plus the three new ones), while the subsequent columns list the results of each individual run. On out-of-distribution and open benchmark tests, all \textsc{ReSched} runs outperform DANIEL, while in-distribution performance on $\text{SD}_1$–10×5 remains comparable. Moreover, the variation across seeds is much smaller than the performance gap between \textsc{ReSched} and DANIEL, indicating that our gains are robust and not due to a fortunate choice of random seed.

\begin{table}[t]
    \centering
    \caption{Performance of \textsc{ReSched} trained with REINFORCE across four random seeds.}
    \resizebox{0.90\textwidth}{!}{\begin{tabular}{l|c|c|c|c|c|c}
\toprule
\textbf{Dataset} & DANIEL & \textsc{ReSched}-Avg. & \textsc{ReSched}-Orig. & Seed1 & Seed2 & Seed3 \\
\midrule\midrule
\multicolumn{7}{c}{\textbf{In-distribution}} \\
\midrule
$\text{SD}_1$-10$\times$05 & \textbf{10.87} & 11.75 & 12.25 & 11.40 & 12.33 & 11.03 \\
\midrule\midrule
\multicolumn{7}{c}{\textbf{Out-of-distribution}} \\
\midrule
$\text{SD}_1$-30$\times$10 & 5.10 & \textbf{3.10} & 3.44 & 2.46 & 3.39 & 3.09 \\
$\text{SD}_1$-40$\times$10 & 3.65 & \textbf{2.23} & 2.54 & 1.64 & 2.46 & 2.26 \\
$\text{SD}_2$-30$\times$10 & 14.85 & \textbf{9.04} & 8.79 & 9.35 & 9.78 & 8.22 \\
$\text{SD}_2$-40$\times$10 & 0.52 & \textbf{-2.61} & -2.40 & -3.00 & -1.80 & -3.23 \\
Avg.            & 6.03 & \textbf{2.94} & 3.09 & 2.61 & 3.46 & 2.59 \\
\midrule\midrule
\multicolumn{7}{c}{\textbf{Open Benchmark}} \\
\midrule
Brandimarte     & 13.58 & \textbf{10.28} & 9.08 & 11.70 & 10.10 & 10.24 \\
Hurink (edata)  & 16.33 & \textbf{15.84} & 15.48 & 16.46 & 15.52 & 15.88 \\
Hurink (rdata)  & 11.42 & \textbf{10.13} & 10.18 & 10.30 & 10.46 & 9.57 \\
Hurink (vdata)  & 3.28 & \textbf{2.90} & 3.48 & 2.60 & 2.55 & 2.95 \\
Avg.            & 11.15 & \textbf{9.78} & 9.56 & 10.27 & 9.66 & 9.66 \\
\bottomrule
\end{tabular}}
    \label{tab:FJSP-Robust}
    \begin{tablenotes} \footnotesize
    \item[1] Values are average optimality gaps (\%) w.r.t.\ the best-known upper bounds. \\ "\textsc{ReSched}-Avg." denotes the mean over all four \textsc{ReSched} runs (original + three additional seeds).
    \end{tablenotes}
\end{table}

\paragraph{Performance on JSSP with Synthetic Data and DMU Benchmark}
In the main text, we have reported the performance of \textsc{ReSched} on the Taillard benchmark. Here, we also evaluate its performance on synthetic JSSP data and the DMU benchmark. The synthetic data is generated using the same method as in~\cite{L2D}, and the DMU benchmark is a widely used benchmark for JSSP. The results are shown in Table~\ref{tab:JSSP-synthetic} and Table~\ref{tab:JSSP-DMU}, respectively. Similar to the Taillard benchmark, \textsc{ReSched} achieves the best performance on both synthetic data and DMU benchmark by solely using the \textbf{same} model trained on the $10\times10$ synthetic data. 

\begin{table}[h]
    \centering
    \caption{Performance on  synthetic datasets of JSSP.}
    \resizebox{0.90\textwidth}{!}{\begin{tabular}{c|cccc|cc|c}
\toprule

\multirow{3}{*}{\textbf{Size}} 
& \multicolumn{4}{c|}{\textbf{PDRs}}
& \multicolumn{2}{c|}{\textbf{DRL-based}} 
& \multirow{3}{*}{\textbf{Opt. Rate(\%)}\textsuperscript{1}}\\

& \multirow{2}{*}{SPT} & \multirow{2}{*}{MWKR} & \multirow{2}{*}{FDD/MWKR} & \multirow{2}{*}{MOPNR} & \multirow{2}{*}{L2D}
& \textsc{ReSched} \\

&&&&&& $10\times10$ \\

\midrule

\multirow{1}{*}{$6\times6$}
  &42.0&34.6&24.0&29.2&17.7&\textbf{7.0}&100\\

\multirow{1}{*}{$10\times10$}
  &50.0&42.6&36.6&36.5&22.3 &\textbf{9.5}&100\\

\multirow{1}{*}{$15\times15$}
  &59.2&52.6&45.1&42.6&26.7 &\textbf{14.3}&99\\

\multirow{1}{*}{$20\times20$}
  &62.0&58.6&49.6&45.5&29.0 &\textbf{15.0}&4\\

\multirow{1}{*}{$30\times20$}
  &65.3&58.7&48.6&44.7&29.2 &\textbf{16.0}&12\\

\midrule

\multirow{1}{*}{$50\times20$}
  &54.9&48.1&38.4&33.7&22.1&\textbf{12.8}&48\\

\multirow{1}{*}{$100\times20$}
  &35.1&27.0&19.6&14.7&9.4&\textbf{4.3}&2\\

\bottomrule
\end{tabular}
}
    \label{tab:JSSP-synthetic}
    \begin{tablenotes} \footnotesize
    \item[1] Opt. Rate is the rate of instances for which OR-Tools returns the optimal solution.
    \end{tablenotes}
\end{table}

\begin{table}[h]
    \centering
    \caption{Performance on DMU benchmark of JSSP.}
    \resizebox{0.90\textwidth}{!}{\begin{tabular}{c|cccc|cc|c}
\toprule

\multirow{3}{*}{\textbf{Size}} 
& \multicolumn{4}{c|}{\textbf{PDRs}}
& \multicolumn{2}{c|}{\textbf{DRL-based}} 
& \multirow{3}{*}{\textbf{UB}\textsuperscript{1}}\\

& \multirow{2}{*}{SPT} & \multirow{2}{*}{MWKR} & \multirow{2}{*}{FDD/MWKR} & \multirow{2}{*}{MOPNR} & \multirow{2}{*}{L2D}
& \textsc{ReSched} \\

&&&&&& $10\times10$ \\

\midrule

\multirow{1}{*}{$20\times15$}
  &64.1&62.1&53.6&49.2&39.0&\textbf{23.7}\textsuperscript{2}&3023.8\\

\multirow{1}{*}{$20\times20$}
  &64.6&58.2&52.5&45.2&37.7&\textbf{22.2}&3472.6\\

\multirow{1}{*}{$30\times15$}
  &62.6&60.9&54.1&47.1&41.9&\textbf{27.8}&3879.0\\

\multirow{1}{*}{$30\times20$}
  &65.9&63.2&60.1&52.0&39.5&\textbf{28.3}&4248.4\\

\midrule

\multirow{1}{*}{$40\times15$}
  &55.9&52.9&51.4&44.7&35.4 &\textbf{26.5}&4871.2\\

\multirow{1}{*}{$40\times20$}
  &63.0&61.1&55.5&49.2&39.4&\textbf{29.2}&5240.9\\

\multirow{1}{*}{$50\times15$}
  &50.38&48.94&52.55&40.78&36.2&\textbf{26.3}&5950.6\\

\multirow{1}{*}{$50\times20$}
  &62.2&56.4&57.3&49.6&38.8&\textbf{31.8}&6227.3\\

\bottomrule
\end{tabular}

}
    \label{tab:JSSP-DMU}
    \begin{tablenotes} \footnotesize
    \item[1] 1.Upper Bound (UB) is the best known solution (available) for each instance.
    \item[2] 2.\textbf{Instance-wise average gap} is reported to provide a higher accuracy.
    \end{tablenotes}
\end{table}

\begin{table}
\centering
\caption{Performance on FFSP.}
\resizebox{0.8\textwidth}{!}{





\begin{tabular}{c|ccc|ccc}
\toprule
 & \multicolumn{3}{c|}{\textbf{Greedy}} & \multicolumn{3}{c}{\textbf{Sampling}} \\
\cmidrule(lr){1-4}\cmidrule(l){5-7}
\textbf{setting} & FFSP20 & FFSP50 & FFSP100 & FFSP20 & FFSP50 & FFSP100 \\
\midrule
Matnet20  & 28.05 & 52.58 & 93.00  & 27.31 & 52.36 & 93.40 \\
Matnet50  & 27.78 & 52.05 & 92.17  & 27.05 & 51.55 & 91.86 \\
Matnet100 & 27.64 & 51.79 & \textbf{91.79} & 27.09 & 51.40 & 91.50 \\
ReSched20 & \textbf{26.65} & \textbf{51.24} & 92.28 & \textbf{25.12} & \textbf{49.65} & \textbf{90.80} \\
\bottomrule
\end{tabular}

}

\captionsetup{justification=raggedright,singlelinecheck=false}
\label{tab:FFSP}
\end{table}

\paragraph{Performance on FFSP}
In the main text, we reported the overall performance of \textsc{ReSched} on the FFSP. Here, we present the detailed results in Table~\ref{tab:FFSP}.

\section{Theoretical Acceleration Opportunities: KV Cache}
The \textsc{ReSched} framework allows for theoretical acceleration via a Key-Value (KV) caching mechanism, owing to the following structural properties:
\begin{itemize}
    \item \emph{Backward-looking edges:} Each operation node is updated \emph{only} by its successor operation;
    \item The representation of each successor operation is \emph{fixed} (determined by the minimum duration);
    \item Operation nodes are \emph{not} updated by machine nodes, which contain dynamic features that vary across scheduling steps.
\end{itemize}

\begin{figure}[t]
\centering
\includegraphics[width=0.99\columnwidth]{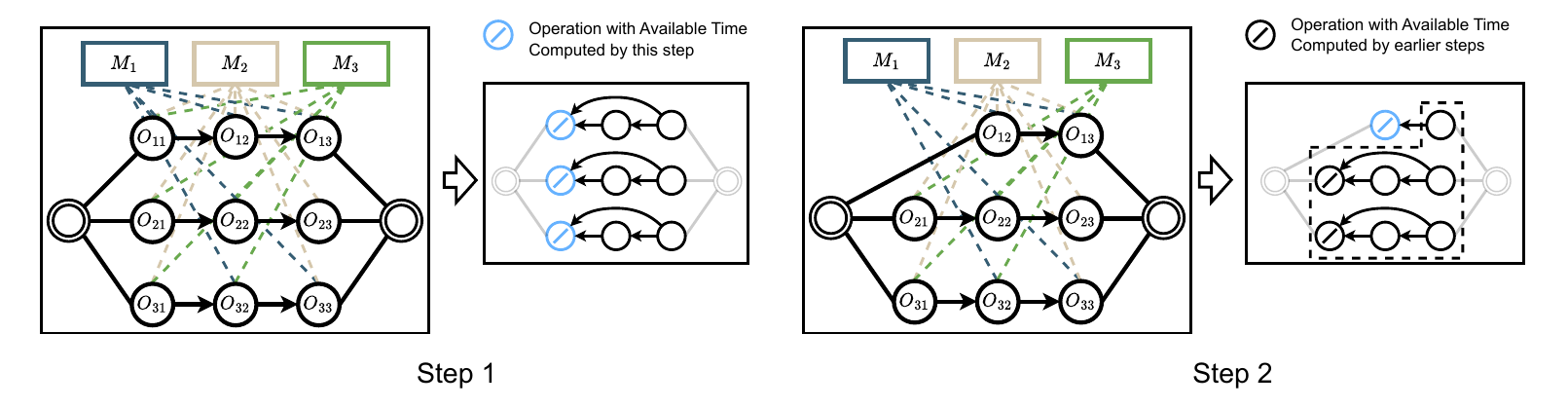}
\caption{Illustration of the KV cache mechanism in \textsc{ReSched}.}
\label{fig:kv-cache}
\end{figure}

As illustrated in Figure~\ref{fig:kv-cache}, this structure enables us to cache and reuse the key-value pairs of previously computed operation nodes during multi-step decoding, avoiding redundant computation across steps. 
To see this, consider an instance with $n$ operations and $m$ machines, decoded in $n$ decision steps by an $L$-layer Transformer. Without KV cache, at each step the O2O branch performs self-attention over $n$ operation nodes, with complexity $\mathcal{O}(L n^2)$, and the O2M branch performs cross-attention between $m$ machine queries and $n$ operation keys/values, with complexity $\mathcal{O}(L m n)$. Over $n$ decoding steps, the total attention cost is therefore
\[
\mathcal{O}\big(L (n^3 + m n^2)\big).
\]
With KV cache, the operation representations (and their keys/values) are computed only once in the O2O branch, with cost $\mathcal{O}(L n^2)$, and then reused at all later steps; the O2M branch still needs to be recomputed at each step due to changing machine availability, giving a total cost of
\[
\mathcal{O}\big(L n^2 (1 + m)\big).
\]
The asymptotic reduction in attention computation is thus by a factor of
\[
\frac{L (n^3 + m n^2)}{L n^2 (1 + m)} = \frac{n + m}{1 + m},
\]
which approaches $\frac{n}{1+m}$ when $n \gg m$, a common regime in scheduling where the number of operations greatly exceeds the number of machines. In our experimental settings, the number of operations is typically 10–40 times larger than the number of machines (i.e., $n \approx 10m$–$40m$), which implies a reduction of about 10–40$\times$ in attention computation when KV cache is applied.

While this caching mechanism is not yet implemented in our current experiments, it presents a promising future direction for further inference-time acceleration.

\section{Proof of State-dependent Optimality in Scheduling} \label{app:proof_state_optimality}

\begin{proof}
Fix any decision step $t$ and any state $\mathcal{S}_t \in \mathbb{S}$.
By Definition~\ref{def:state}, $\mathcal{S}_t$ uniquely specifies the following elements:
(i) the available time $AT_t^m$ of each machine $m$;
(ii) for every job $j$, the completion status of its operations up to $t$,
    which is equivalent to knowing the finish time of the most recently scheduled
    operation of $j$;
(iii) the operation–operation dependency (O2O); and
(iv) the operation–machine feasibility graph with duration (O2M).

Let $\mathcal{O}$ be the set of all operations in the scheduling instance described by $\mathcal{S}_t$.
From element (ii), we can uniquely determine the subset
\[
\mathcal{O}^{\mathrm{done}}(\mathcal{S}_t) \subseteq \mathcal{O}
\]
of operations that have already been completed by step $t$, and hence the set of remaining operations
\[
\mathcal{O}^{\mathrm{rem}}(\mathcal{S}_t) := \mathcal{O} \setminus \mathcal{O}^{\mathrm{done}}(\mathcal{S}_t).
\]
From elements (iii) and (iv), we know for every pair of operations in $\mathcal{O}$ their precedence relations, and for every operation in $\mathcal{O}$ the set of feasible machines together with the corresponding durations.
Restricting these relations to $\mathcal{O}^{\mathrm{rem}}(\mathcal{S}_t)$ yields
\[
\text{(a) all remaining precedence constraints among operations in } \mathcal{O}^{\mathrm{rem}}(\mathcal{S}_t),
\]
\[
\text{(b) all remaining duration and machine-feasibility constraints for operations in } \mathcal{O}^{\mathrm{rem}}(\mathcal{S}_t).
\]

Finally, element (i) gives the current available time $AT_t^m$ of every machine $m$.  
From element (ii) we know exactly which operations have already been completed, and thus the set of remaining operations $\mathcal{O}^{\mathrm{rem}}(\mathcal{S}_t)$.  
Element (iii) specifies all precedence constraints between operations; restricting it to $\mathcal{O}^{\mathrm{rem}}(\mathcal{S}_t)$ yields the remaining O2O dependencies.  
Element (iv) specifies, for each operation, the set of feasible machines and their processing durations; restricting it to $\mathcal{O}^{\mathrm{rem}}(\mathcal{S}_t)$ yields the remaining O2M relations.

Any feasible completion of the schedule starting from $\mathcal{S}_t$ must therefore  
1) assign each operation in $\mathcal{O}^{\mathrm{rem}}(\mathcal{S}_t)$ to exactly one machine that is feasible according to the restricted O2M graph;  
2) respect all restricted precedence constraints among operations in $\mathcal{O}^{\mathrm{rem}}(\mathcal{S}_t)$; and  
3) choose a start time for each operation that is not earlier than both the available time $AT_t^m$ of its assigned machine and the finish times of all its predecessors.

Let $\mathcal{F}(\mathcal{S}_t)$ denote the set of all schedules for the remaining operations that satisfy 1)–3). By construction, this feasible completion set $\mathcal{F}(\mathcal{S}_t)$ is completely determined by the tuple
\[
\bigl(AT_t^{(\cdot)},\, \mathcal{O}^{\mathrm{rem}}(\mathcal{S}_t),\, \text{restricted O2O},\, \text{restricted O2M}\bigr),
\]
which itself is uniquely determined by $\mathcal{S}_t$ through elements (i)–(iv) above. In particular, $\mathcal{F}(\mathcal{S}_t)$ depends only on the current state $\mathcal{S}_t$ and not on which trajectory has led to $\mathcal{S}_t$.

Now consider any two scheduling trajectories $\tau_1$ and $\tau_2$ (possibly defined on different execution histories) that reach the same state $\mathcal{S}_t$.
Let $\mathcal{F}(\mathcal{S}_t)$ denote the set of all feasible completions of the schedule starting from $\mathcal{S}_t$, that is, all feasible ways to schedule the remaining operations in $\mathcal{O}^{\mathrm{rem}}(\mathcal{S}_t)$.
Since $\mathcal{F}(\mathcal{S}_t)$ is a function of $\mathcal{S}_t$ only, the feasible solution set for the remaining subproblem induced by $\tau_1$ and $\tau_2$ is identical.
For any objective that depends only on the operation finish times (for example, the makespan), each completion in $\mathcal{F}(\mathcal{S}_t)$ has the same objective value regardless of whether it is viewed as a continuation of $\tau_1$ or $\tau_2$.
Hence the optimal objective value and the set of optimal completions from $\mathcal{S}_t$ are the same for both trajectories.
This proves that the corresponding remaining subproblems share an identical feasible solution set and the same set of optimal solutions, which establishes the proposition.
\end{proof}

\section{\textsc{ReSched} with Proximal Policy Optimization}
In the main paper we train \textsc{ReSched} with REINFORCE, following standard practice in Transformer-based neural combinatorial optimization (e.g., AM~\citep{kool2018attention} and POMO~\citep{kwon2020pomo}). This choice keeps the implementation simple and allows us to focus on our main contributions, namely the state representation and network architecture. To verify that our framework is not tied to REINFORCE, and to align with the PPO-based baseline DANIEL, we also implement a PPO version of \textsc{ReSched} (denoted as \textsc{ReSched}-PPO).

\subsection{Training Algorithm with PPO}
We also train \textsc{ReSched} with Proximal Policy Optimization (PPO) using a
clipped surrogate objective and generalized advantage estimation (GAE); the
procedure is summarized in Algorithm~\ref{alg:ppo}.

\begin{algorithm}[ht]
\caption{Training \textsc{ReSched} with PPO}
\label{alg:ppo}
\begin{algorithmic}[1]
\STATE {\bfseries Input:} scheduling environment $\mathcal{E}$, model parameters $\theta$,
total number of policy updates $U$, number of trajectories per update $B$,
mini-batch size $M$, number of PPO epochs $K$ per update,
discount factor $\gamma$, GAE parameter $\lambda$,
clip range $\varepsilon$, learning rate $\alpha$
\FOR{update $u = 1$ to $U$}
    \STATE Set old parameters $\theta_{\text{old}} \leftarrow \theta$;
    \STATE Initialize buffer $\mathcal{D} \leftarrow \emptyset$;
    \FOR{$i = 1$ to $B$}
        \STATE Sample a training instance from $\mathcal{E}$ and reset to get initial state $s_0$;
        \STATE Initialize trajectory $\mathcal{T} \leftarrow \emptyset$;
        \REPEAT
            \STATE Encode current state $s_t$ and compute $\pi_{\theta_{\text{old}}}(\cdot \mid s_t)$ and $V_{\theta_{\text{old}}}(s_t)$;
            \STATE Sample action $a_t \sim \pi_{\theta_{\text{old}}}(\cdot \mid s_t)$;
            \STATE Execute $a_t$ to obtain reward $r_t$ and next state $s_{t+1}$;
            \STATE Store $(s_t, a_t, r_t, V_{\theta_{\text{old}}}(s_t), \log \pi_{\theta_{\text{old}}}(a_t \mid s_t))$ into $\mathcal{T}$;
            \STATE $s_t \leftarrow s_{t+1}$;
        \UNTIL{the scheduling instance is finished}
        \STATE Append all time steps in $\mathcal{T}$ to buffer $\mathcal{D}$;
    \ENDFOR
    \STATE Using rewards and old values in $\mathcal{D}$, compute returns $R_t$ and advantages $\hat{A}_t$ with GAE$(\gamma,\lambda)$;
    \FOR{PPO epoch $k = 1$ to $K$}
        \STATE Shuffle $\mathcal{D}$ and split into mini-batches of size $M$;
        \FOR{each mini-batch $\mathcal{B} \subset \mathcal{D}$}
            \STATE For all $(s_t, a_t)$ in $\mathcal{B}$, compute
            $\pi_{\theta}(\cdot \mid s_t)$ and $V_{\theta}(s_t)$;
            \STATE Let $p_t = \pi_{\theta}(a_t \mid s_t)$,
            $p^{\text{old}}_t = \pi_{\theta_{\text{old}}}(a_t \mid s_t)$,
            and $r_t = p_t / p^{\text{old}}_t$;
            \STATE Policy loss:
            \[
              \mathcal{L}_{\text{policy}}
              = - \mathbb{E}_{t \in \mathcal{B}}\!\left[
              \min\!\big(r_t \hat{A}_t,\,
              \mathrm{clip}(r_t, 1-\varepsilon, 1+\varepsilon)\hat{A}_t\big)\right];
            \]
            \STATE Value loss with clipping:
            \[
            \begin{aligned}
            \mathcal{L}_{\text{value}}
            = \mathbb{E}_{t \in \mathcal{B}}\Big[
              \max\Big(
                & (V_{\theta}(s_t) - R_t)^2,\; \\
                & (V_{\theta_{\text{old}}}(s_t)
                  + \mathrm{clip}(V_{\theta}(s_t) - V_{\theta_{\text{old}}}(s_t), -\varepsilon, \varepsilon)
                  - R_t)^2
              \Big)
            \Big].
            \end{aligned}
            \]
            \STATE Entropy bonus:
            $\mathcal{L}_{\text{entropy}}
            = - \mathbb{E}_{t \in \mathcal{B}}[\mathcal{H}(\pi_{\theta}(\cdot \mid s_t))]$;
            \STATE Total loss:
            $\mathcal{L}
            = \mathcal{L}_{\text{policy}}
            + c_v \mathcal{L}_{\text{value}}
            + c_e \mathcal{L}_{\text{entropy}}$;
            \STATE Update parameters:
            $\theta \leftarrow \theta - \alpha \nabla_{\theta} \mathcal{L}$;
        \ENDFOR
    \ENDFOR
    \STATE Optionally step the learning-rate scheduler and evaluate $\pi_{\theta}$ on a validation set;
\ENDFOR
\STATE {\bfseries Output:} trained model parameters $\theta$
\end{algorithmic}
\end{algorithm}

\subsection{PPO Configuration and Experimental Setup}
In the PPO version, we keep the state representation and network architecture
identical to the REINFORCE version.  The PPO implementation follows the clipped-surrogate variant
with generalized advantage estimation (GAE).  Unless otherwise stated, we adopt
the same critic architecture and PPO hyperparameters as DANIEL~\citep{DANIEL}, i.e.,
discount factor $\gamma = 1$, GAE parameter $\lambda = 0.98$,
clip range $\varepsilon = 0.2$, value-loss coefficient $c_v = 0.5$ and
entropy coefficient $c_e = 0.01$.  The critic shares the encoder with the
policy network and adds a small MLP head that outputs a scalar state value
$V_\theta(s_t)$.

\paragraph{Experimental setting.}
We evaluate \textsc{ReSched}-PPO on our main task, FJSP, and for simplicity we restrict the comparison to the strongest DRL-based method specifically designed for FJSP, DANIEL. We keep the same training and test splits as in the main paper: models are trained on synthetic instances from the $\text{SD}_1$/$\text{SD}_2$ datasets and evaluated under a greedy decoding strategy on (i) in-distribution synthetic instances, (ii) out-of-distribution synthetic instances of larger sizes, and (iii) open FJSP benchmarks.

\paragraph{Training budget.} For clarity, we measure the training budget in terms of the effective number of
trajectory updates.

In vanilla REINFORCE, each sampled trajectory is used once for a single gradient update. In the main-paper configuration, we run $2000$ updates and collect $1000$ trajectories per update, which yields $2000 \times 1000 = 2{,}000{,}000$ trajectories. We adopted this relatively large budget to compensate for the higher variance and lower sample efficiency of REINFORCE and to stabilize training.

In PPO, for each policy update we collect a batch of $B$ trajectories from the environment and reuse them for $K$ optimization epochs, which corresponds to $B K$ effective trajectories per policy update.
To disentangle the effect of the RL algorithm from that of the training budget, and to enable a fair comparison with DANIEL, we consider two budget regimes:
In the \textbf{\emph{small-budget}} setting, we choose $U = 400$ updates with $B = 50$ trajectories per update and $K = 4$ PPO epochs, resulting in $400 \times 50 \times 4 = 80{,}000$ effective trajectory updates. This matches the training budget used by DANIEL, for which we report the original DANIEL
results while retraining both \textsc{ReSched}-REINFORCE and \textsc{ReSched}-PPO under the same
budget. 
In the \textbf{\emph{large-budget}} setting, we increase $U$ by a factor of $25$ so that both DANIEL and \textsc{ReSched}-PPO are retrained with approximately $2{,}000{,}000$ effective trajectory updates, matching the budget of our original \textsc{ReSched}-REINFORCE configuration, for which we directly reuse the main-paper model.

\subsection{Comparison of \textsc{ReSched} and DANIEL under different training budgets} \label{app:ppo-exp}
The results for DANIEL, \textsc{ReSched}-REINFORCE, and \textsc{ReSched}-PPO are reported in Table~\ref{tab:FJSP-PPO}.

\begin{table*}[t]
\centering
\caption{Results on FJSP: in-distribution (top); out-of-distribution (middle); benchmark (bottom)}
\label{tab:FJSP-PPO}
\begin{threeparttable}

\begin{subtable}{\textwidth}
\centering
\resizebox{\textwidth}{!}{
\begin{tabular}{ccr|cccc|ccc|ccc|c}
\toprule
\multirow{2}{*}{\textbf{Dataset}} & \multirow{2}{*}{\textbf{Size}} & \multirow{2}{*}{\textbf{}} 
& \multicolumn{4}{c|}{\textbf{PDRs}} 
& \multicolumn{3}{c|}{\textbf{Small training budget}} 
& \multicolumn{3}{c|}{\textbf{Large training budget}} 
& \multirow{2}{*}{\textbf{OR-Tools}\textsuperscript{1}} \\
&&
& FIFO & SPT & MOPNR & MWKR
& DANIEL & \textsc{ReSched}-REINFORCE & \textsc{ReSched}-PPO
& DANIEL & \textsc{ReSched}-REINFORCE  & \textsc{ReSched}-PPO
& \\

\midrule
\multirow{4}{*}{\rotatebox[origin=c]{90}{$\text{SD}_1$}}
& $10\times5$ & Gap(\%)$\downarrow$ & 24.06 & 34.76 & 19.87 & 17.58 & \textbf{10.87} & 14.61 & 11.20 & \textbf{9.22} & 12.25 & 11.48 & 96.32 (5\%)\\
& $20\times5$ & Gap(\%)$\downarrow$ & 14.87 & 22.56 & 13.85 & 11.51 & \textbf{5.03} & 8.51 & 5.84 & \textbf{3.08} & 4.63 & 4.20 & 188.15 (0\%)\\
& $15\times10$ & Gap(\%)$\downarrow$ & 28.65 & 38.22 & 20.68 & 19.41 & 12.42 & 12.91 & \textbf{9.41} & 10.84 & 6.51 & \textbf{5.21} & 143.53 (7\%)\\
& $20\times10$ & Gap(\%)$\downarrow$ & 19.22 & 30.25 & 12.20 & 10.30 & \textbf{1.31} & 6.97 & 3.50 & \textbf{-0.43} & 0.48 & -0.36 & 195.98 (0\%)\\
\midrule
\multirow{4}{*}{\rotatebox[origin=c]{90}{$\text{SD}_2$}}
& $10\times5$ & Gap(\%)$\downarrow$ & 76.47 & 57.96 & 72.52 & 70.01 & 25.68 & 19.06 & \textbf{15.77} & 24.75 & 16.36 & \textbf{14.24} & 326.24 (96\%)\\
& $20\times5$ & Gap(\%)$\downarrow$ & 74.59 & 38.91 & 74.58 & 71.31 & 11.52 & 10.76 & \textbf{9.21} & 8.86 & 9.87 & \textbf{6.83} & 602.04 (0\%)\\
& $15\times10$ & Gap(\%)$\downarrow$ & 132.23 & 86.74 & 125.32 & 121.45 & 57.16 & \textbf{24.03} & 25.02 & 53.94 & 18.14 & \textbf{16.73} & 377.17 (28\%)\\
& $20\times10$ & Gap(\%)$\downarrow$ & 135.27 & 78.82 & 129.09 & 124.98 & 31.58 & \textbf{17.91} & 19.38 & 28.89 & 14.18 & \textbf{13.79} & 464.16 (1\%)\\

\midrule
\multicolumn{2}{c}{\textbf{Avg.}} & Gap(\%)$\downarrow$
& 63.17  
& 48.53  
& 58.51  
& 55.82  
& 19.45  
& 14.35  
& \textbf{12.42}  
& 17.39  
& 10.30  
& \textbf{9.02}   
& - \\
\bottomrule
\end{tabular}
}
\end{subtable}

\begin{subtable}{\textwidth}
\centering
\resizebox{\textwidth}{!}{
\begin{tabular}{ccr|cc|cccccc|cccccc|c}
\toprule
\multirow{3}{*}{\textbf{Dataset}} & \multirow{3}{*}{\textbf{Size}} & \multirow{3}{*}{\textbf{}} 
& \multicolumn{2}{c|}{\textbf{Top PDRs}} 
& \multicolumn{6}{c|}{\textbf{Small training budget}} 
& \multicolumn{6}{c|}{\textbf{Large training budget}} 
& \multirow{3}{*}{\textbf{OR-Tools}} \\
&&
& \multirow{2}{*}{SPT} & \multirow{2}{*}{MWKR}
& \multicolumn{2}{c}{DANIEL} & \multicolumn{2}{c}{\textsc{ReSched}-REINFORCE} & \multicolumn{2}{c|}{\textsc{ReSched}-PPO}
& \multicolumn{2}{c}{DANIEL} & \multicolumn{2}{c}{\textsc{ReSched}-REINFORCE} & \multicolumn{2}{c|}{\textsc{ReSched}-PPO}
& \\
&&&
& & $10\times5$ & $20\times10$ & $10\times5$ & $20\times10$ & $10\times5$ & $20\times10$
& $10\times5$ & $20\times10$ & $10\times5$ & $20\times10$ & $10\times5$ & $20\times10$
& \\
\midrule
\multirow{2}{*}{\rotatebox[origin=c]{90}{$\text{SD}_1$}}
& $30\times10$ & Gap(\%)$\downarrow$
& 27.47 & 13.96 & 5.10 & \textbf{2.50} & 9.26 & 4.43 & 3.21 & 4.50 & 2.05 & \textbf{1.45} & 3.44 & 2.69 & 2.81 & 1.91 & 274.67 (6\%)\\
& $40\times10$ & Gap(\%)$\downarrow$
& 21.66 & 13.37 & 3.65 & \textbf{1.52} & 8.04 & 3.37 & 2.19 & 3.23 & 0.98 & \textbf{0.53} & 2.54 & 1.64 & 2.06 & 1.45 & 365.96 (3\%)\\
\midrule
\multirow{2}{*}{\rotatebox[origin=c]{90}{$\text{SD}_2$}}
& $30\times10$ & Gap(\%)$\downarrow$
& 59.74 & 122.89 & 14.85 & 11.95 & 37.76 & \textbf{8.17} & 8.97 & 11.96 & 21.51 & 18.59 & 8.79 & 6.30 & \textbf{5.16} & 7.05 & 692.26 (0\%)\\
& $40\times10$ & Gap(\%)$\downarrow$
& 38.74 & 108.66 & 0.52 & -1.67 & 23.25 & -3.39 & \textbf{-3.51} & -1.29 & 4.60 & 0.05 & -2.40 & -4.58 & \textbf{-6.02} & \textbf{-6.02} & 998.39 (0\%)\\
\midrule 
\multicolumn{2}{c}{\textbf{Avg.}} & Gap(\%)$\downarrow$
& 36.90 & 64.72 & 6.03 & 3.58 & 19.58 & 3.15 & \textbf{2.72} & 4.60 & 7.29 & 5.16 & 3.09 & 1.51 & \textbf{1.00} & 1.10 & - \\
\bottomrule
\end{tabular}
}
\end{subtable}

\begin{subtable}{\textwidth}
\centering
\resizebox{\textwidth}{!}{
\begin{tabular}{ccr|c|cccccc|ccc}
\toprule
\multirow{2}{*}{\textbf{Strategy}}
&\multirow{2}{*}{\textbf{Dataset}}
&\multirow{2}{*}{} 
& MWKR & \multicolumn{2}{c}{DANIEL} & \multicolumn{2}{c}{\textsc{ReSched}-REINFORCE} & \multicolumn{2}{c|}{\textsc{ReSched}-PPO}
&\multirow{2}{*}{\textbf{2SGA}}
&\multirow{2}{*}{\textbf{OR-Tools}}
&\multirow{2}{*}{\textbf{UB}\textsuperscript{2}} \\
&&& (Top PDR) & $10\times5$ & $15\times10$ & $10\times5$ & $15\times10$ & $10\times5$ & $15\times10$ &&   \\
\midrule
\multirow{5}{*}{\textbf{Small training budget}}
&Brandimarte   &Gap(\%)$\downarrow$&28.91&13.58&12.97&13.50&14.33&\textbf{10.52}&10.77&175.20(3.17\%)&174.20(1.5\%)&172.7\\
&Hurink(edata) &Gap(\%)$\downarrow$&18.60&16.33&\textbf{14.41}&18.06&16.34&18.29&20.00&-&1028.93(-0.03\%)&1028.88\\
&Hurink(rdata) &Gap(\%)$\downarrow$&13.86&11.42&12.07&10.28&\textbf{9.92}&10.45&13.44&-&935.80(0.11\%)&934.28\\
&Hurink(vdata) &Gap(\%)$\downarrow$&4.22 &3.28&3.75&\textbf{2.67}&2.82&4.26&3.60&812.20(0.39\%)&919.60(-0.01\%)&919.50\\
\cmidrule(lr){2-13}
&\textbf{Avg.} &Gap(\%)$\downarrow$&16.40&11.15&\textbf{10.80}&11.13&10.86&10.88&11.95&-&-&-\\
\midrule
\multirow{5}{*}{\textbf{Large training budget}}
&Brandimarte   &Gap(\%)$\downarrow$&28.91&14.10&14.58&\textbf{9.08}&12.49&10.34&10.97&175.20(3.17\%)&174.20(1.5\%)&172.7\\
&Hurink(edata) &Gap(\%)$\downarrow$&18.60&\textbf{14.67}&15.71&15.48&16.34&16.25&18.16&-&1028.93(-0.03\%)&1028.88\\
&Hurink(rdata) &Gap(\%)$\downarrow$&13.86&11.20&10.34&\textbf{10.18}&10.31&10.59&10.42&-&935.80(0.11\%)&934.28\\
&Hurink(vdata) &Gap(\%)$\downarrow$&4.22 &3.17&3.42&3.48&\textbf{2.55}&6.41&3.98&812.20(0.39\%)&919.60(-0.01\%)&919.50\\
\cmidrule(lr){2-13}
&\textbf{Avg.} &Gap(\%)$\downarrow$&16.40&10.79&11.01&\textbf{9.56}&10.42&10.90&10.88&-&-&-\\
\bottomrule
\end{tabular}
}
\end{subtable}

\begin{tablenotes}
\footnotesize
\item[] \raggedright
1. OR-Tools (1800s per instance): solution and optimal ratio reported; \\
2. UB is the best-known solution~\citep{behnke2012test}, used as the baseline to compute gaps; \\
3. \textbf{Instance-wise average gap} is reported to reduce bias from varying instance scales.
\end{tablenotes}
\end{threeparttable}
\end{table*}

\paragraph{In-distribution performance under a small training budget.} Under the small-budget configuration, \textsc{ReSched}-PPO converges in noticeably fewer updates than \textsc{ReSched}-REINFORCE and achieves the best average in-distribution performance, with an average gap of 12.42\% compared to 14.35\% for \textsc{ReSched}-REINFORCE and 19.45\% for DANIEL. Even the REINFORCE version, despite its slower convergence, still surpasses DANIEL in terms of average optimality gap, indicating that the main gain comes from our architecture rather than from using a larger training budget. PPO further exploits the limited data more efficiently than REINFORCE, yielding the best average results among all compared methods.

\paragraph{Out-of-distribution performance under a small training budget.} In the out-of-distribution setting, models trained on $\text{SD}_1$-10×5 and $\text{SD}_1$-20×10 are evaluated on larger unseen instances. Under the small training budget, as shown in Table~\ref{tab:FJSP-PPO}, \textsc{ReSched}-PPO trained only on $\text{SD}_1$-10×5 already achieves the best average OOD performance, achieves the best \emph{average} OOD performance (2.72\% gap), outperforming both DANIEL and \textsc{ReSched}-REINFORCE, showing that our architecture combined with PPO can generalize well even when trained on the smallest problem size with limited data. In contrast, the REINFORCE version trained on $\text{SD}_1$-10×5 generalizes poorly under this small budget, which is attributed to its lower sample efficiency. However, when the training size is increased to $\text{SD}_1$-20×10 (with the same budget), its OOD performance improves substantially and becomes comparable to DANIEL. Overall, these results indicate that our architecture does generalize beyond the training size, with PPO exploiting limited data more efficiently (as PPO trains the policy multiple times on the same data), whereas REINFORCE requires somewhat richer training instances to reach a similar level of OOD performance.

\paragraph{Open benchmark performance under a small training budget.} On the open benchmark sets, all DRL-based methods achieve very similar average performance under the small training budget. In particular, the REINFORCE and PPO versions of \textsc{ReSched} trained on small synthetic instances remain competitive with the best DANIEL configuration. Specifically, the REINFORCE-based \textsc{ReSched} requires approximately 30 minutes to train on the $\text{SD}_1$-10×5 instances, while the PPO-based version trains in about 20 minutes. In comparison, HGNN~\citep{Song} and DANIEL ~\citep{DANIEL} require about 21 minutes and 18 minutes, respectively, to complete the same training. This indicates that, with only a small amount of training data and a short training time, current DRL-based methods already reach a reasonably strong level on real-world scheduling benchmarks, highlighting their practical potential for real scheduling applications.

\paragraph{In-distribution performance under a large training budget.} Under the large-budget setting, both \textsc{ReSched} versions benefit from the increased data: the average gap of \textsc{ReSched}-REINFORCE drops from 14.35\% to 10.30\%, and \textsc{ReSched}-PPO further improves it to 9.02\%. \textsc{ReSched}-PPO consistently improves over the REINFORCE version across almost all datasets. DANIEL also benefits from the larger budget, yet even the REINFORCE version of \textsc{ReSched} now clearly outperforms DANIEL on most $\text{SD}_2$ and medium-sized $\text{SD}_1$ instances, and the \textsc{ReSched}-PPO achieves the best average in-distribution performance overall. This shows that, even when DANIEL is given a comparable large training budget, the proposed architecture (especially with PPO) remains substantially stronger.

\paragraph{Out-of-distribution performance under a large training budget.} In the out-of-distribution setting with a large training budget, both \textsc{ReSched} versions clearly outperform DANIEL on average, and the PPO version consistently improves over the REINFORCE version across most datasets.

Interestingly, when we increase the training budget of DANIEL by 25×, its OOD performance on the $\text{SD}_2$ datasets does not improve and even degrades compared to the original setting. This suggests that, under the current training setup, DANIEL does not clearly benefit from additional data, which may partly explain why the original work chose a relatively small training budget. In contrast, \textsc{ReSched} continues to improve when the training budget is increased, indicating that our architecture can effectively leverage more trajectories.

\paragraph{Open benchmark performance under a large training budget.} 
On the open benchmark, all DRL-based methods achieve very similar performance under the large training budget, with only minor differences in gaps. \textsc{ReSched} (both REINFORCE and PPO) remains competitive with DANIEL, indicating that models trained on synthetic $\text{SD}_1$ instances transfer reasonably well to these open benchmark and do not exhibit noticeable overfitting to the synthetic distribution.

These new experiments show that our architecture works well with both REINFORCE and PPO: the PPO version converges faster and further improves in-distribution and out-of-distribution performance, while the REINFORCE version already remains competitive or better than DANIEL under matched budgets. \textbf{In particular, replacing REINFORCE with the stronger PPO algorithm further improves \textsc{ReSched}’s performance, demonstrating that our framework can directly benefit from stronger RL algorithms.} Overall, across all budgets and RL algorithms, \textsc{ReSched} consistently matches or outperforms DANIEL, indicating that the benefits come from our framework rather than from the large amount of training data.

\section{Additional Ablations and Design Choices of \textsc{ReSched}} \label{app:add-exp}

\paragraph{Reward Design}
In our early framework explorations, we tested three different reward formulations: $\Delta$-makespan (the change in makespan before and after taking an action), estimated mean reward \cite{Song}, and estimated lower-bound reward \cite{L2D, DANIEL}. As summarized in Table~\ref{tab:abl-reward}, all three variants are able to train a reasonable policy, but the choice of reward has a noticeable impact on generalization. In particular, the delta-makespan reward tends to fit the synthetic distribution more aggressively: it can slightly improve average performance on synthetic OOD instances, but it significantly degrades performance on the open benchmarks compared to the estimated lower-bound reward, which we interpret as a form of overfitting to the synthetic training distribution. As the estimated lower-bound reward is commonly used in DRL based scheduling methods like L2D~\citep{L2D} and DANIEL~\citep{DANIEL}, we also use it to align with the standard practice.

\begin{table}[t]
    \centering
    \caption{Effect of different reward on the performance of \textsc{ReSched}.}
    \resizebox{0.45\textwidth}{!}{\begin{tabular}{l|c|c|c}
\toprule
\textbf{Dataset} & Est LB & Est Mean & $\Delta$-Makespan \\
\midrule\midrule
\multicolumn{4}{c}{\textbf{In-distribution}} \\
\midrule
$\text{SD}_1$-10$\times$05 & \textbf{12.25} & 12.56 & 13.05 \\
\midrule\midrule
\multicolumn{4}{c}{\textbf{Out-of-distribution}} \\
\midrule
$\text{SD}_1$-30$\times$10 & 3.44 & \textbf{3.29} & 3.30 \\
$\text{SD}_1$-40$\times$10 & 2.54 & \textbf{2.15} & 2.41 \\
$\text{SD}_2$-30$\times$10 & 8.79 & 9.19 & \textbf{7.53} \\
$\text{SD}_2$-40$\times$10 & -2.40 & -2.57 & \textbf{-4.30} \\
Avg.             & 3.09 & 3.015 & \textbf{2.24} \\
\midrule\midrule
\multicolumn{4}{c}{\textbf{Open Benchmark}} \\
\midrule
Brandimarte      & \textbf{9.08} & 9.66 & 15.35 \\
Hurink (edata)   & \textbf{15.48} & 16.58 & 17.27 \\
Hurink (rdata)   & \textbf{10.18} & 10.34 & 11.62 \\
Hurink (vdata)   & 3.48 & \textbf{2.75} & 4.50 \\
Avg.             & \textbf{9.56} & 9.83 & 12.19 \\
\bottomrule
\end{tabular}}
    \label{tab:abl-reward}
\end{table}

\paragraph{Discount Factor $\gamma$}
We initially set the discount factor to $\gamma = 0.99$, following standard deep RL practices. To investigate the impact of this hyperparameter, we also performed additional experiments with $\gamma = 1.0$. The results, summarized in Table~\ref{tab:abl-discount}, show that our method is not sensitive to this hyperparameter choice, and we have maintained $\gamma = 0.99$ in our main configuration.

\begin{table}[t]
    \centering
    \caption{Effect of the discount factor $\gamma$ on the performance of \textsc{ReSched}.}
    \resizebox{0.9\textwidth}{!}{\begin{tabular}{l|c|c|c|c|c}
\toprule
\textbf{Dataset} & \textsc{ReSched}-0.99-Avg. & \textsc{ReSched}-1.0-Avg. & \textsc{ReSched}-1.0-seed1 & \textsc{ReSched}-1.0-seed2 & \textsc{ReSched}-1.0-seed3 \\
\midrule\midrule
\multicolumn{6}{c}{\textbf{In-distribution}} \\
\midrule
$\text{SD}_1$-10$\times$05 & \textbf{11.75} & \textbf{11.75} & 11.41 & 11.76 & 12.07 \\
\midrule\midrule
\multicolumn{6}{c}{\textbf{Out-of-distribution}} \\
\midrule
$\text{SD}_1$-30$\times$10 & \textbf{3.10} & \textbf{3.10} & 2.63 & 3.69 & 2.97 \\
$\text{SD}_1$-40$\times$10 & 2.23 & \textbf{2.19} & 2.00 & 2.51 & 2.06 \\
$\text{SD}_2$-30$\times$10 & \textbf{9.04} & 10.03 & 8.28 & 11.58 & 10.22 \\
$\text{SD}_2$-40$\times$10 & \textbf{-2.61} & -2.09 & -3.55 & -0.11 & -2.61 \\
Avg.             & \textbf{2.94} & 3.31 & 2.34 & 4.42 & 3.16 \\
\midrule\midrule
\multicolumn{6}{c}{\textbf{Open Benchmark}} \\
\midrule
Brandimarte      & \textbf{10.28} & 11.52 & 12.44 & 10.32 & 11.80 \\
Hurink (edata)   & \textbf{15.84} & 16.26 & 16.03 & 16.89 & 15.85 \\
Hurink (rdata)   & \textbf{10.13} & 10.50 & 10.49 & 10.11 & 10.91 \\
Hurink (vdata)   & 2.90 & \textbf{2.67} & 2.66 & 2.37 & 2.98 \\
Avg.             & \textbf{9.78} & 10.24 & 10.41 & 9.92 & 10.39 \\
\bottomrule
\end{tabular}}
    \label{tab:abl-discount}
    \begin{tablenotes} \footnotesize
    \item[1] Values are average optimality gaps (\%) w.r.t.\ the best-known upper bounds.
    \item[2] "-Avg." denote averages over four and three seeds, respectively.
    \end{tablenotes}
\end{table}

\section{Frequently Asked Questions}

\subsection{\textsc{ReSched} excel on FJSP but are less competitive on recent JSP baselines. What structural or distributional differences make JSSP harder for \textsc{ReSched}?}

As the instance distribution of FJSP moves closer to a pure JSSP (one feasible machine per operation), the relative advantage of FJSP-oriented DRL methods becomes smaller. This phenomenon is not specific to \textsc{ReSched}; it also appears on existing FJSP baselines. For example, on the Hurink benchmarks for FJSP, edata has on average only 1.15 feasible machines per operation (very close to JSSP), whereas rdata has 2 and vdata has about (m/2) machines per operation. As shown in Table~\ref{tab:analysis-Hdata}, all three FJSP methods (HGNN, DANIEL, and \textsc{ReSched}) perform worst on edata and improve as the flexibility increases from edata → rdata → vdata.

\begin{table}[t]
    \centering
    \caption{Perfprmance degradation analysis for RL approaches.}
    \resizebox{0.6\textwidth}{!}{
    \begin{tabular}{l|c|c|c|c}
    \toprule
    \textbf{Dataset} & \textbf{Avg. machines/oper} & \textbf{HGNN 10$\times$5} & \textbf{DANIEL 10$\times$5} & \textbf{ReSched 10$\times$5} \\
    \midrule\midrule
    Hurink (edata) & 1.15 & 15.53 & 16.33 & 15.48 \\
    Hurink (rdata) & 2 & 11.15 & 11.42 & 10.18 \\
    Hurink (vdata) & $m/2$ & 4.25 & 3.28 & 3.48 \\
    \bottomrule
    \end{tabular}
    }
    \label{tab:analysis-Hdata}
\end{table}

Beyond the empirical trend observed on the Hurink dataset, we would like to highlight a structural reason for this behavior. Methods for the Flexible JSP (FJSP), including our own, are explicitly designed to model the interactions between operations and machines. Our architecture, for instance, incorporates a dedicated O2M branch and machine nodes, whose primary function is to resolve assignments among multiple feasible machines and balance their loads.

In strict JSP instances, however, this flexibility is absent, as each operation is assigned to a single feasible machine. Consequently, the O2M decision structure becomes less critical, and a portion of the model's dedicated capacity is consequently less informative. The challenge in these instances shifts predominantly to resolving fine-grained sequencing conflicts along predetermined machine routes.

In contrast, many recent JSP-tailored methods simplify the representation by pooling machine-related information into aggregated operation features, effectively modeling the problem on an operation-only graph~\citep{lee2024graph,L2D,Park_Chun_Kim_Kim_Park_2021,9757835,park2021schedulenet,ijcai2023p594}. This abstraction can enable the model to concentrate more directly on resolving sequencing conflicts within a fixed disjunctive graph, which may yield stronger performance on classic JSP benchmarks.

However, we posit that in practical manufacturing settings, an operation is intrinsically linked to its machine resources, as embodied by attributes like machine-specific processing times. This is why our framework intentionally maintains distinct operation and machine nodes with explicit O2O and O2M edges. This architectural choice necessarily trades off some degree of short-term, JSP-specific optimality to achieve a more generalizable, faithful, and extensible model. It is specifically designed to accommodate FJSP and other complex variants where machine flexibility and richer precedence structures are fundamental to the problem.

\subsection{The FJSP, JSSP, and FFSP represent relatively limited and less complex scheduling problems. How could this approach be extended to handle a broader range of scheduling problems, including those with more diverse and realistic constraints encountered in real-world applications?}

The goal of \textsc{ReSched} is precisely to build a simple and generic framework that can be quickly adapted to different scheduling or resource-allocation scenarios. To this end, we start from FJSP as a generic formulation, view other classic shceudling problems (JSSP, FFSP) through a heterogeneous operation–machine graph, and deliberately avoid problem-specific features or heuristics in both the state representation and the network architecture. This is why the same model can already handle FJSP, JSSP, and FFSP without any architectural changes.

As a concrete example, dependent task offloading in mobile edge computing~\citep{9627763, 10908871, 8833501} can be viewed as an instance of our formulation. In this problem, an application is modeled as a DAG of computation tasks with precedence constraints, and the scheduler must decide for each ready task whether to execute it locally or offload it to one of several heterogeneous edge/cloud servers, subject to communication and resource limits, in order to minimize end-to-end latency (or a latency–energy trade-off).

In this case, each computation task in DAG corresponds to an operation node, and each local device, edge server, or cloud node corresponds to a machine node. Task dependencies form the O2O edges, while the end-to-end latency for executing a task on a given resource (communication plus computation, or purely computation for local execution) defines the O2M edge durations, with infeasible assignments masked out. Under this mapping, the task DAG is directly represented as the O2O connection mask fed into the operation branch, so no architectural change is required. The scheduler’s decision at each step is to select a feasible task–resource pair, and the objective is typically a function of task completion times such as total latency or a latency–energy trade-off. Our state representation and dual-branch Transformer can thus be reused without modification; only the duration model and reward definition need to be adapted, illustrating how \textsc{ReSched} can be extended to more realistic scheduling scenarios.

\section{Statement on the Use of Large Language Models (LLMs)}
We used large language models (e.g., ChatGPT) as a general-purpose assistant for language polishing (grammar, wording, clarity) and for suggesting occasional non-substantive code snippets. LLMs were not used for problem formulation, algorithm/model design, experimental design or analysis, data generation, or drawing conclusions. All core code and technical content were implemented and verified by us. We reviewed and edited all LLM-assisted text and code, and take full responsibility for every part of the manuscript, including sections that benefited from LLM assistance.

\end{document}